\DeclarePairedDelimiter\ceil{\lceil}{\rceil}
\newtheorem{prop}{Proposition}
\begin{document}

\title{Towards Trusted and Fail Safe Execution of Cloud Generated Machine Learning Models}

\title{Intelligent Selection of Trusted Machine Learning Models in Support of IoT and Smart City Services}

\title{Trust-Based Cloud Machine Learning Model Selection For Industrial IoT and Smart City Services}

\author{
}

\author{Basheer~Qolomany,~\IEEEmembership{Graduate Student Member,~IEEE,}
        Ihab~Mohammed,~\IEEEmembership{Graduate  Student Member,~IEEE},
        Ala~Al-Fuqaha,~\IEEEmembership{Senior Member,~IEEE},
        Mohsen~Guizani,~\IEEEmembership{Fellow,~IEEE},
        Junaid~Qadir,~\IEEEmembership{Senior Member,~IEEE}
\thanks{B. Qolomany is with  Department of Cyber Systems, College of Business \& Technology, University of Nebraska at Kearney, Kearney, NE 68849, USA (e-mail: qolomanyb@unk.edu) }       
\thanks{I. Mohammed is with the Department of Computer
Science, Western Michigan University, Kalamazoo, MI 49008 USA (e-mail: ihabahmedmoha.mohammed@wmich.edu).
}
\thanks{A. Al-Fuqaha is with the Information and Computing Technology Division, College of Science and Engineering, Hamad Bin Khalifa University, Doha, Qatar and with the Computer Science Department, Western Michigan University, Kalamazoo, Michigan (e-mail: 
ala@ieee.org).}
\thanks{M. Guizani is with the Computer Science and Engineering Department, Qatar University, Doha, Qatar  (e-mail: 
mguizani@ieee.org).}

\thanks{J. Qadir is with Information Technology University, Lahore, Pakistan  (e-mail: 
junaid.qadir@itu.edu.pk).}
        }

\maketitle

\begin{abstract}


With Machine Learning (ML) services now used in a number of mission-critical human-facing domains, ensuring the integrity and trustworthiness of ML models becomes all-important. In this work, we consider the paradigm where cloud service providers collect big data from resource-constrained devices for building ML-based prediction models that are then sent back to be run locally on the intermittently-connected resource-constrained devices. Our proposed solution comprises an intelligent polynomial-time heuristic that maximizes the level of trust of ML models by selecting and switching between a subset of the ML models from a superset of models in order to maximize the trustworthiness while respecting the given reconfiguration budget/rate and reducing the cloud communication overhead. We evaluate the performance of our proposed heuristic using two case studies. First, we consider \textit{Industrial IoT (IIoT)} services, and as a proxy for this setting we use the turbofan engine degradation simulation dataset to predict the remaining useful life of an engine. Our results in this setting show that the trust level of the selected models is 0.49\% to 3.17\% less compared to the results obtained using Integer Linear Programming (ILP). Second, we consider \textit{Smart Cities} services, and as a proxy of this setting we use an experimental transportation dataset to predict the number of cars. Our results show that the selected model's trust level is 0.7\% to 2.53\% less compared to the results obtained using ILP. We also show that our proposed heuristic achieves an optimal competitive ratio in a polynomial-time approximation scheme for the problem.

\end{abstract}

\begin{IEEEkeywords}
Trusted Machine Learning Models, Deep Learning, Adversarial Attacks, MLaaS, Automatic Model Selection, Smart City, Industrial IoT (IIoT).
\end{IEEEkeywords}
\IEEEpeerreviewmaketitle

\section{Introduction}

The global market for Machine Learning (ML) has grown rapidly over the last few years largely due to the fast pace of integrating ML with many facets of everyday life, particularly for enabling smart Internet-of-Things (IoT) services. Most of today$'$s IoT predictive analytics rely on cloud-based services, in which IoT resource-constrained devices continuously send their collected data to the cloud \cite{ray_survey_2016}. Resource-constrained devices have limited processing, communication and/or storage capabilities, and often run on batteries. On the cloud, ML as a Service (MLaaS) providers carry out the prediction process and provide data pre-processing, model training, model evaluation, and model update capabilities \cite{tramer_stealing_2016}. The MLaaS market is expected to exceed \$3,754 million by 2024 at a compound annual growth rate (CAGR) of 42\% in the given forecast period \cite{noauthor_machine_2017}. Typical systems include electrical power grids \cite{dan_cloud_2013}, intelligent transportation and vehicular management \cite{meneguette_vehicular_2016}, health care devices \cite{hanen_enhanced_2016}, household appliances \cite{zhang_iehouse_2017}, predictive maintenance systems \cite{mourtzis_cloud-based_2016} in Industrial IoT (IIoT) and many more. However, ML models can be targeted by malicious adversaries \cite{barreno_can_2006} due to the participatory nature of such systems. Cyber-attacks against critical infrastructure are not just theories, they are very real and have already been used to effect. For example, in December 2015 a cyber-attack on Ukraine's power grid left 700,000 people without electricity for several hours \cite{lee_analysis_2016}. The Stuxnet worm, which was first uncovered in 2010, is believed to be responsible for causing substantial damage to Iran's nuclear program \cite{kushner_real_2013}. In March 2016, the U.S. Justice Department indicated that cyber-attacks tied to the Iranian regime \cite{noauthor_seven_2016} targeted 46 major financial institutions and a dam outside of New York City.  


Perhaps the most pressing challenge in the emerging cloud computing area is that of establishing trust \cite{khan_establishing_2010} \cite{pearson_privacy_2013}. Despite the importance of trust in cloud computing, a common conceptual model of trust in cloud computing has not yet been defined \cite{chiregi_cloud_2017} and it is becoming increasingly complex for cloud users to distinguish between service providers offering similar types of services in terms of trustworthiness \cite{sidhu_compliance_2014}. Trust has been investigated from different disciplinary lenses such as psychology, sociology, economics, management, and information systems (IS) but there is no commonly accepted definition of trust \cite{mcknight_what_nodate} \cite{mcknight_what_2001}. That is, depending on the context, we may think of many different things when someone uses the word `trust.' Merriam-Webster's dictionary defines the word 'trust' as \emph{"assured reliance on the character, ability, strength, or truth of someone or something."}  Our definition for the trust in this paper refers to the ML models that agree most with the predictions of an ensemble of ML models. Therefore, a model that agrees more with the predictions of the ensemble is more `trustworthy' compared to the one that agrees less with the ensemble. For example, assume that we have 5 models ($M_1$, $M_2$, $M_3$, $M_4$ and $M_5$), and the model $M_1$ agrees with 3 other models, while $M_2$ agrees with only 2 other models, then $M_1$ is more trustworthy than $M_2$.

The performance of ML models can be quantified based on their decision time, accuracy, and precision of resulting decisions \cite{domingos_few_2012}. However, as such models are used for more critical and sensitive decisions (e.g., whether a drug should be administered to a patient or should an autonomous vehicle stop for pedestrians), it becomes more important to ensure that they provide high accuracy and precision guarantees. Assessing learning models in terms of trustworthiness along with the traditional criteria of decision time, accuracy, and precision establishes a tradeoff between simplicity and power \cite{kaul_speed_2018}. ML classifiers are vulnerable to adversarial examples, which are samples of input data that are maliciously modified in a way that is intended to cause an ML classifier to misclassify similar examples. Moreover, it is known that adversarial examples generated by one classifier are likely to cause another classifier to make the same mistake \cite{kurakin_adversarial_2016}. In many cases, the modifications can successfully cause a classifier to make a mistake even though the modifications is imperceptible to a human observer. In general, adversarial attacks can be classified into a misclassification attack or a targeted attack based on its goals \cite{elsayed_adversarial_2018} \cite{hosseini_blocking_2017} \cite{moati_reputation-based_2014}. In a \textit{misclassification attack}, the adversary intends to cause the classifier to output a label different from the original label. In a \textit{targeted attack}, on the other hand, the adversary intends to cause the classifier to output a specific misleading label. 

In this paper, we envision the paradigm where resource-constrained IoT devices execute ML models locally, without necessarily being always connected to the cloud. Some advantages of our proposed heuristic is its applicability to a number of applications scenarios beyond the pale of the traditional paradigm where it is not desirable to execute the model on the cloud due to latency, connectivity, energy, privacy, and security concerns. Consequently, it is expected that the users should be able to determine the trustworthiness of service providers in order to select them with confidence and with some degree of assurance that their service offerings will not behave unpredictably or maliciously. Our proposed heuristic strives to minimize the communications overhead between the cloud and the resource-constrained devices. Selected ML models are sent to resource-constrained devices to be used.  The proposed heuristic also has a limitation that it is not intended to improve the trustworthiness of the models trained in Federated Learning (FL) systems when each client preserves its own data locally. Instead, our approach can be applied to improve the trustworthiness of centralized approach of learning, when all the clients send their data to  a MLaaS provider to build ML model on the cloud, then this model will be sent back to be hosted on a resource-constrained devices.  The target of the proposed heuristic is not to handle all different types of the attacks. We only consider poisoning attacks on ML classifiers. Within this scenario, an attacker may poison the training data by injecting carefully designed samples to eventually compromise the whole learning process. Figure \ref{Fig_1.png} shows a general architecture for the proposed system. On the cloud side we have $M$ ML models, $model_{1}$, $model_{2}$, $\dots$, $model_{M}$.



The main contributions of the work can be summarized as:

\begin{itemize}
\item [(i)] We formulate the problem of finding a subset of ML models that maximize the trustworthiness while respecting given reconfiguration budget and rate constraints. We also prove the problem is NP-complete. 
\item [(ii)] We propose a heuristic that maximizes the level of trust of ML models and finds a near-optimal solution in polynomial time by selecting a subset from a superset of ML models. Our trust metric of an ML model is based on recent and past historical data that measure the degree of agreement of the ML model with other models in an ensemble of ML models. 
\item [(iii)] The proposed system has fail-safe state, such that if the proposed heuristic does not find a trusted ML model in the superset of models, it sends a fail-safe execution alert. This alert informs the resource-constrained devices that no trusted ML model exists in the system. As a result, the resource-constrained devices can fail safely as required by the application that they service.
\item [(iv)] Building on the above insights, we apply the proposed heuristic to two different training datasets. The first dataset, based on the CityPulse project \cite{noauthor_citypulse_nodate}, is used to predict the number of vehicles as a surrogate use-case for smart city services. The second data set, provided by the Prognostics CoE at NASA Ames \cite{saxena_turbofan_2008}, is used to predict the remaining useful life of a turbofan engine as a surrogate use-case for IIoT smart services.
\item [(v)] 	We applied the swap $x$ and $100-x$ percentiles approach as a causative adversarial attack by altering the training dataset label as we will describe in Section VI.

\end{itemize}

For the convenience of the readers, Table \ref{Table_1_} provides a list of the acronyms used in this paper.
\begin{table}[h]
\centering
\scriptsize
\caption{List of Important Acronyms  Used}
\label{Table_1_}
\begin{tabular}{c p{4.7cm}} 
 \hline
DL	& Deep Learning \\
FL	& Federated Learning \\
IIoT & Industrial Internet of Things \\
ILP & Integer Linear Programming \\
IoT & Internet of Things \\
IS & Information System \\
LP & Linear Programming  \\
LSTM & Long Short-Term Memory \\
ML & Machine Learning \\
MLaaS & Machine Learning as a Service \\
NP & Non-deterministic Polynomial-time\\
PM	& Predictive Maintenance  \\
RMSE & Root Mean Square Error \\
TML & Trusted Machine Learning \\
\hline
\end{tabular}
\end{table}

The remainder of this paper is organized as follows:
Section \ref{RelatedWorkSec} presents the most recent related work. The background information of case studies and thread model is introduced in Section \ref{BackgroundSec}. Section \ref{SystemModelSec} discusses the system model and problem formulation. Section \ref{ProposedSolSec} discusses our proposed solutions and their competitive ratio analysis. Section \ref{PerformanceEvaSec} presents our  experimental setup, experimental results and the lessons learned. Finally, Section \ref{ConclusionSec} concludes the paper and discusses future research directions. 

\begin{figure} 
\centering
\includegraphics[width=0.50\textwidth]{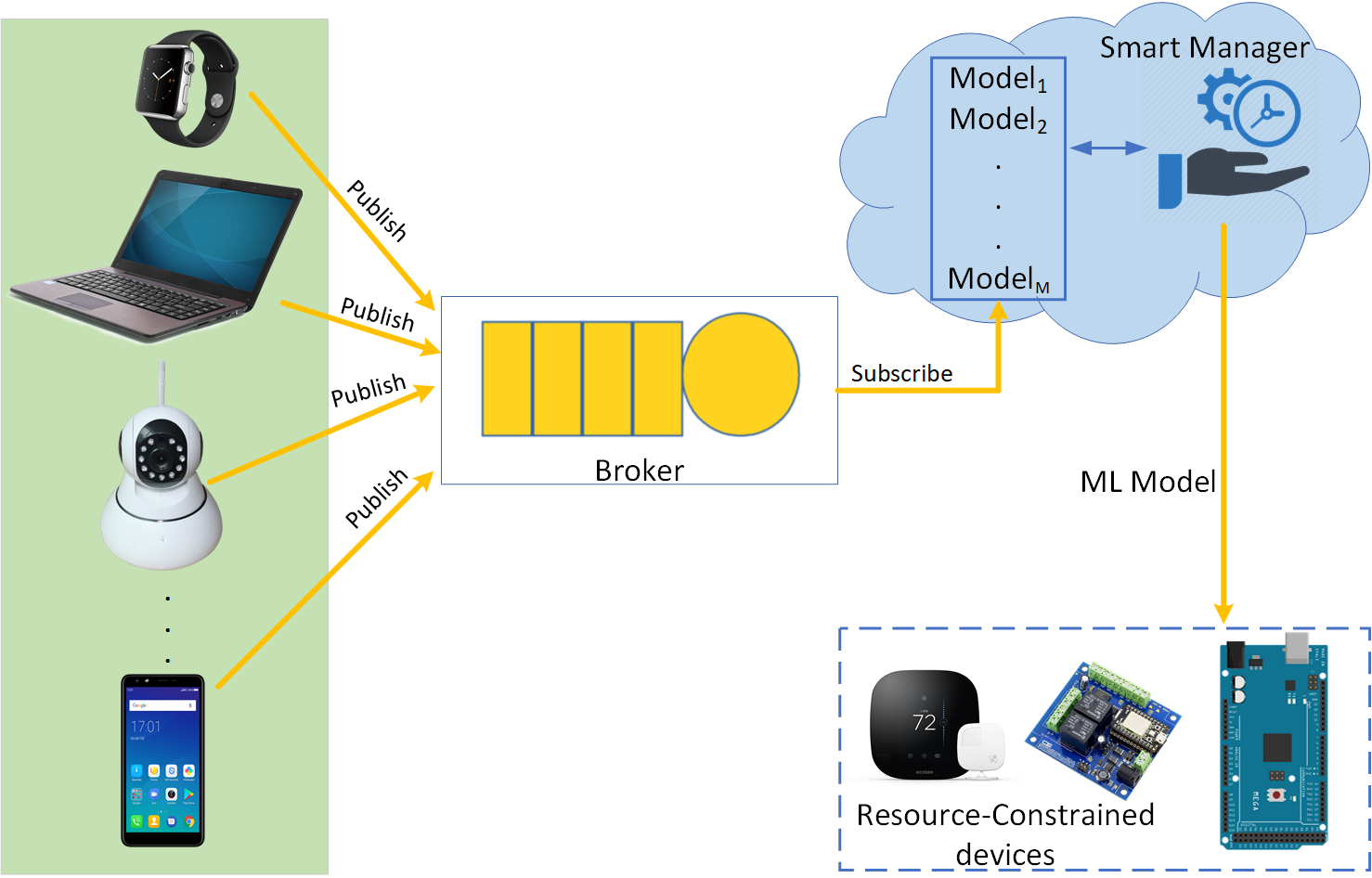}
\caption{General architecture for the proposed system of selecting a trustworthy subset of ML models built by different cloud service providers.}
\label{Fig_1.png}
\end{figure} 

\section{Background}
\label{BackgroundSec}
\subsection{Case Studies}

Several case studies could be considered, in which the proposed heuristic helps to gain the best trust level. Here, we discuss two representative case studies. The first case study considers IIoT predictive maintenance while the second one considers real-time traffic flow prediction in smart cities. Figure \ref{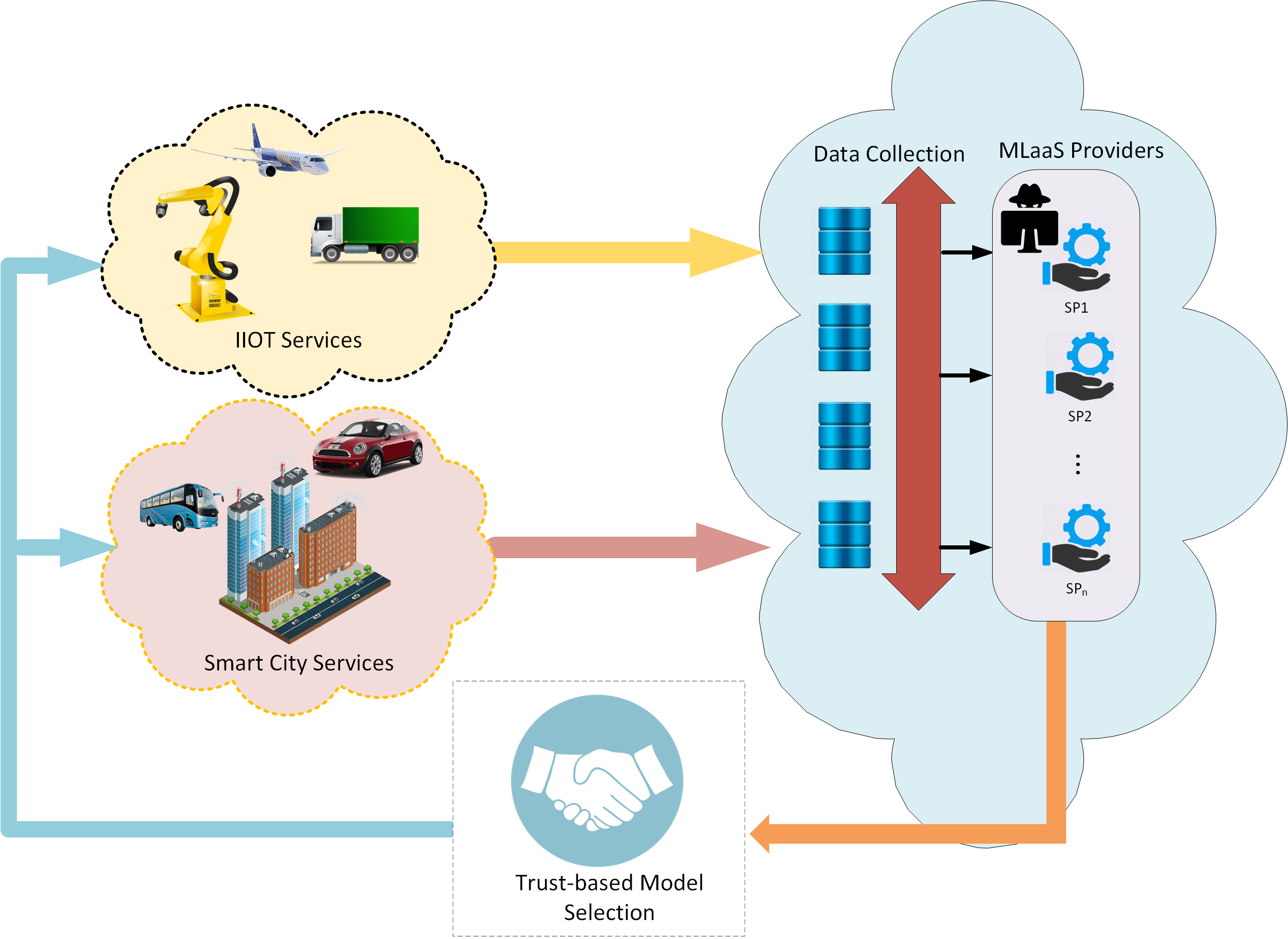} illustrates the trust-based model selection problem addressed in this research and also depicts the two considered case studies. During each decision period, our proposed heuristic switches between the subset of selected models with a goal of maximizing the overall trustworthiness while respecting the given reconfiguration budget and rate.

\begin{figure}[h]
\centering
\includegraphics[width=0.50\textwidth]{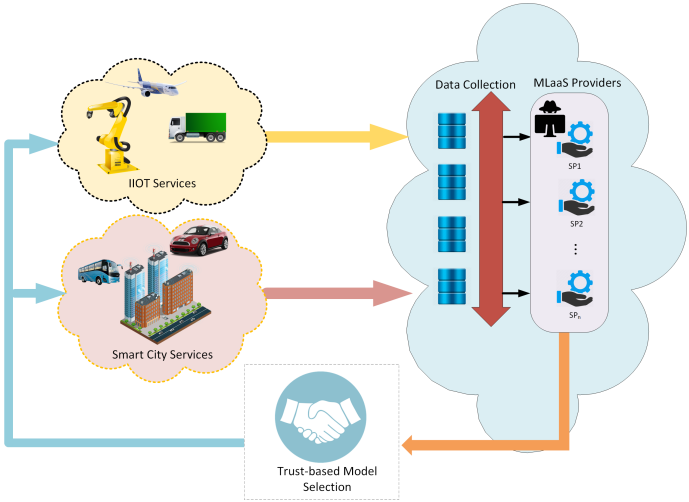}
\caption{Trust-based model selection problem for IIoT and smart city case scenarios.}
\label{Fig2_1.png}
\end{figure}

\subsubsection{IIoT Predictive Maintenance Case Study}

A predictive maintenance (PM) strategy uses ML methods to identify, monitor, and analyze system variables during operation. Also, PM alerts operators to preemptively perform maintenance before a system failure occurs \cite{cline_predictive_2017}. Being able to stay ahead of equipment shutdowns in a mine, steel mill, or factory, PM can save money and time for a busy enterprise \cite{sipos_log-based_2014}. With PM, the data is collected over time to monitor the state of the machine and is then analyzed to find patterns that can help predict failures. In many cases, it is desirable to have prediction models hosted on resource-constrained embedded devices. Predictive maintenance systems need to provide real-time control of the machines based on the deviation of the real-time flow readings from the predicted ones. In such systems, embedded sensors collect short-term state of the machine readings, which are relayed to the cloud directly through communications infrastructure, or indirectly through the use of ferry nodes. Because of its compute and store capabilities, the cloud is capable of collecting the short-term readings to build long-term big data of sensor readings. These readings are then utilized to build a PM model for each of the underlying flow sensors. The constructed models are then sent back to the flow sensors so that they actuate their associated machines when a deviation is observed between the actual and projected flow readings. 

There are scenarios in which cyber-attackers attempt to compromise PM models directly. Consequently, data that leaves its internal operating environment is subject to third-party attacks. For instance, an adversary can create a causative attack to poison the learner's classifications. This is possible by altering the training process through influence over the training data. Therefore, when the system is re-trained, the learner learns an incorrect decision-making function. Thus it is important to ensure the trustworthiness of ML models before they are hosted and used on resource-constrained devices.

\subsubsection{Smart City (Traffic Flow Prediction) Case Study}

Traffic flow prediction plays an important role in intelligent transportation management and route guidance. Such predictions can help in relieving traffic congestion, reducing air pollution, and in providing secure traffic conditions \cite{lv_traffic_2015}. Traffic flow prediction heavily depends on historical and real-time traffic data collected from various sensor sources. These sources include inductive loops, radars, cameras, mobile global positioning systems (GPS), crowdsourcing, social media, etc. Transportation management and control are now becoming more data-driven \cite{paul_chapter_2017}. However, inferring traffic flow under real-world conditions in real-time is still a challenging research problem due to the computational complexity of building, training, learning and storing traffic flow models on resource-constrained devices. 
 
In our proposed approach, various sensor technologies are used to automatically collect  short-term data of the traffic flow and send them to the cloud through communications infrastructure or through the use of ferry nodes. The cloud is capable of collecting the short-term readings to build long-term big data of sensor readings. These readings are then utilized by MLaaS service providers to build a model. The constructed models are then sent back to be hosted on the resource-constrained devices, in order to predict the traffic flow in real-time. Intelligent transportation systems are highly visible, and attacks against them result in high impact on critical infrastructure. For instance, the attacks can cause vehicular accidents or create traffic jams that affect freight movements, and daily commutes, etc. Thus to make the traffic movement more efficient and improve road safety, road operators need to constantly monitor traffic and current roadway conditions by using an array of cameras and sensors that are strategically placed on the road network. These cameras and sensors send back real-time data to the control center \cite{huq_cyberattacks_2017}. The data is subject to causative adversarial attacks, which are launched by altering the training process by influencing the training data and consequently causing the learner to learn an incorrect decision-making function.
 
 Figure \ref{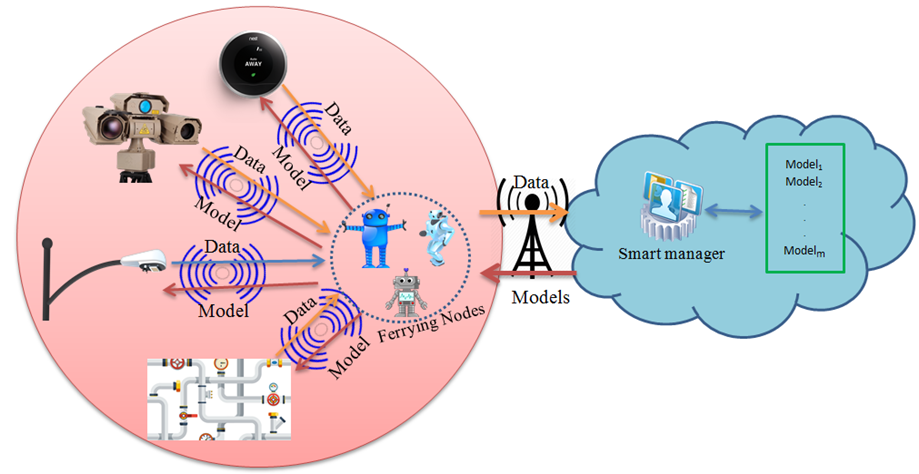} illustrates the use of message ferries to collect data from resource-constrained devices. Collected data is delivered to the cloud in order to build the ML models by the MLaaS service providers. Next, the ferrying nodes deliver the ML models to be hosted on the resource-constrained devices.

\begin{figure} 
\centering
\includegraphics[width=0.50\textwidth]{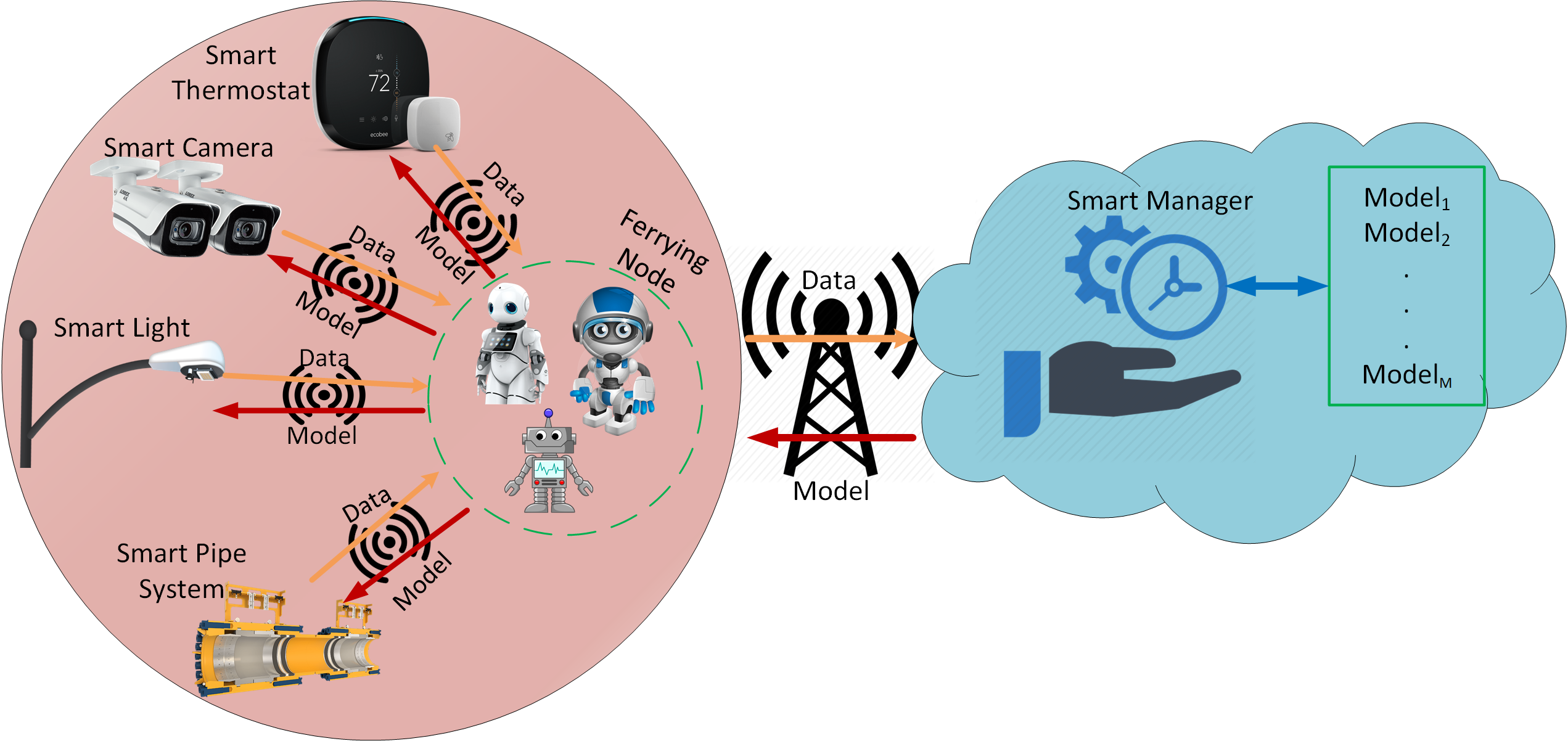}
\caption{The exchange of data/models between resource-constrained devices and the cloud using message ferries.}
\label{Fig_2.png}
\end{figure}

\subsection{Threat Model}

\subsubsection{Adversary Knowledge}
For both the case studies, we only consider poisoning attacks on the ML classifiers. Within this scenario, an attacker may poison the training data by injecting carefully designed samples to eventually compromise the whole learning process. Poisoning may thus be regarded as an adversarial contamination of the training data.  In our experiments, we use the swap $x$ and $100-x$ percentiles attack model as a causative attack against the LSTM algorithm.

\subsubsection{Adversary Goal}
The goal of an adversarial MLaaS provider is to deliver an ML model that results in sub-optimal or erroneous results when executed on resource-constrained IoT devices. The incentive of the adversarial MLaaS provider is to seek gains by colluding with business competitors of MLaaS clients.

\section{Related Work}
\label{RelatedWorkSec}
In this section, we review recent related works. Figure \ref{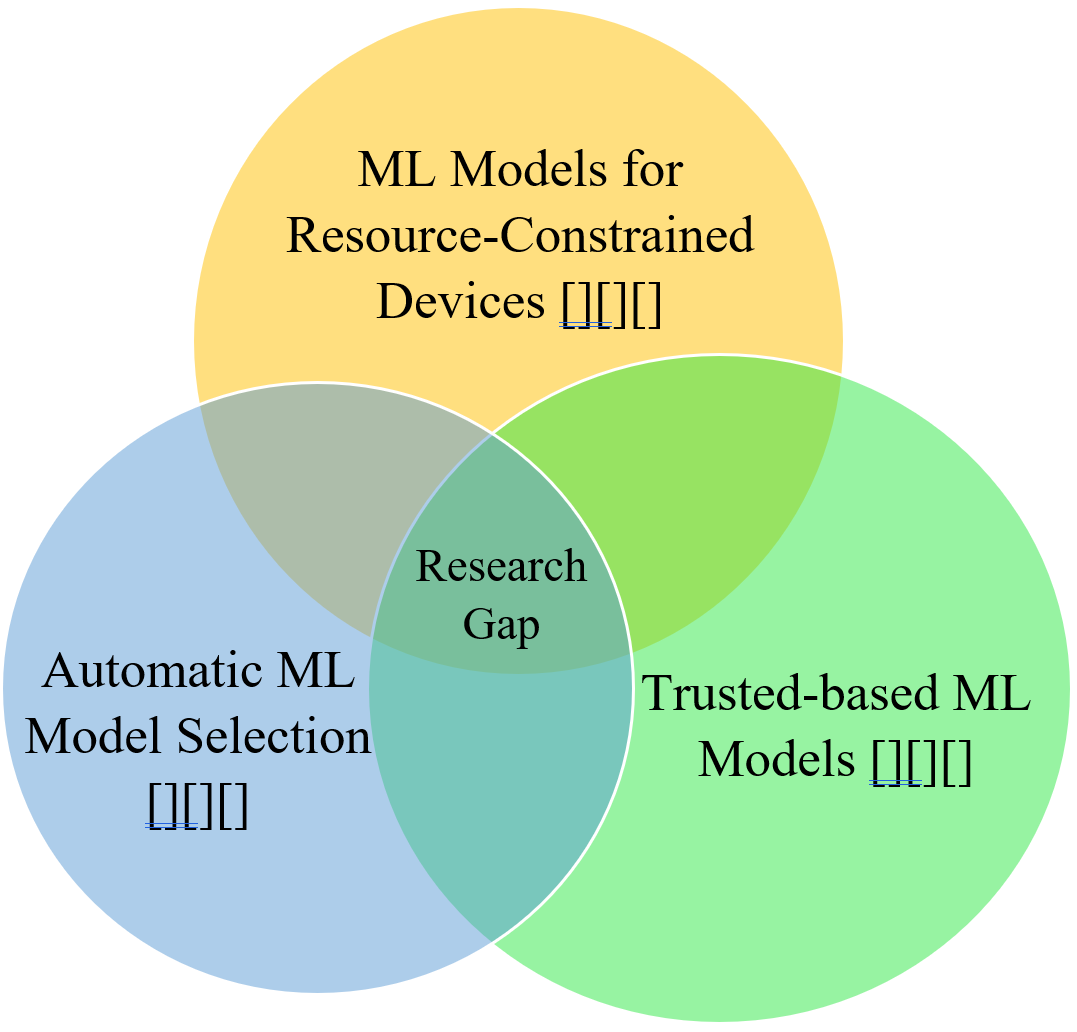} shows the research gap that we address in this research. To the best of our knowledge, this paper is the first attempt at designing an intelligent polynomial-time heuristic on the cloud that selects the ML models that should be hosted on IoT resource-constrained devices in order to maximize the trustworthiness of the overall system.

\begin{figure}[h]
\centering
\includegraphics[width=0.50\textwidth]{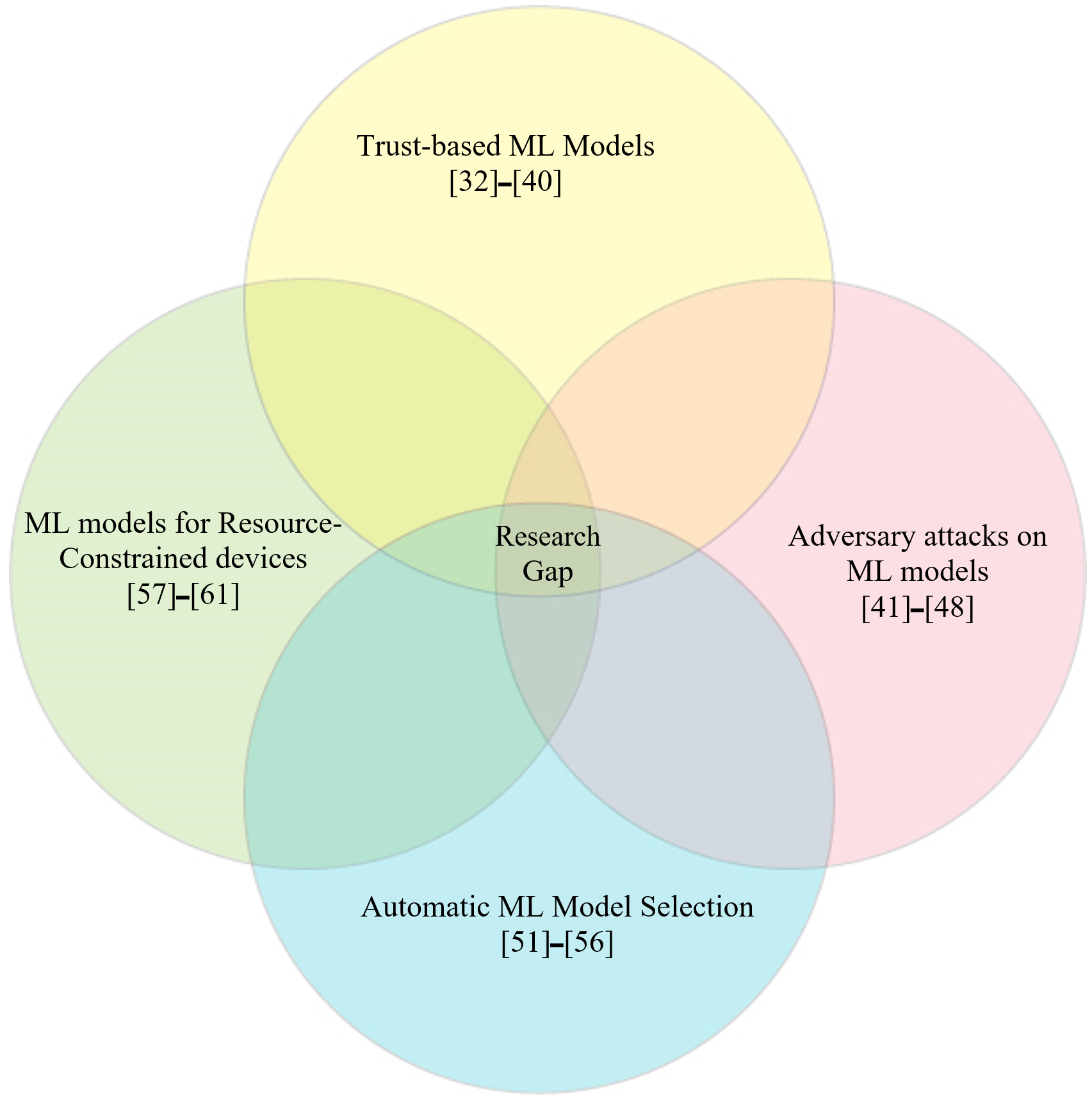}
\caption{The research gap addressed in the paper.}
\label{Fig_3_2.png}
\end{figure}

\subsection{Trust-based ML Models}

Researchers have proposed various approaches to design machine learning algorithms that are trustworthy when using predictions to make critical decisions in real-world applications including healthcare, law, self-driving cars etc. Speicher \textit{et al.} \cite{speicher_reliable_2017} propose an approach to establish of complex ML models by ensuring that in a particular way, a complex model to achieve correct predictions at least on all those data points where a trusted model was already correct. 

Ghosh \textit{et al.} \cite{ghosh_trusted_nodate} proposed the Trusted ML (TML) framework for self-driving cars that uses principles from formal methods for learning ML models. These ML models satisfy properties in temporal logic by using model repair or the data from which the model is learned. Zhang \textit{et al.} \cite{zhang_training_nodate} propose Debugging Using Trusted Items (DUTI) algorithm that uses trusted items to detect outlier and systematic training set bugs. The approach looks for the smallest set of changes in the training set labels, such that, the model learned from this corrected training set predicts labels of the trusted items correctly. 

Ribeiro \textit{et al.} \cite{ribeiro_why_2016} proposed the LIME algorithm, which explains the predictions of any classifier or regressor in an interpretable manner by approximating an interpretable model locally around the prediction. The authors also proposed a method called SP-LIME to select representative and non-redundant predictions, which provides a global view of the model to users. The authors applied the proposed algorithm on both simulated and human subjects in order to decide between and assess models and also identified reasons for not trusting a classifier. Jayasinghe \textit{et al.} \cite{jayasinghe_machine_2019} proposed trust assessment model which specifies the formation of trust from raw data to a final trust value, they proposed an algorithm based on machine learning principles that determine whether an incoming interaction is trustworthy, based on several trust features corresponding to an IoT environment.  Fariha \textit{et al.} \cite{fariha_data_2020} introduced data invariant technique as an approach to achieve  trusted machine learning by reliably detecting tuples on which the prediction of a machine-learned model should not be trusted. They proposed a quantitative semantics to measure the degree of violation of a data invariant, and establish that strong data invariants can be constructed from observations with low variance on the given dataset. Drozdal \textit{et al.} \cite{drozdal_trust_2020} explore trust in the relationship between human data scientists and models produced by AutoML systems. They find that including transparency features in an AutoML tool increased user trust and understandability in the tool; and out of all proposed features, model performance metrics and visualizations are the most important information to data scientists when establishing their trust with an AutoML tool.

Wahab \textit{et al.} \cite{wahab_optimal_2020} proposed a solution for maximizing the detection of VM-based DDoS attacks in cloud systems. Their proposed solution has two components. First, they proposed a trust model between the hypervisor and its guest VMs for the purpose of establishing a credible trust relationship between the hypervisor and guest VMs. Second, they designed a trust-based maximin game between DDoS attackers and hypervisor to minimize the cloud system's detection and maximize this minimization under limited budget of resources. In \cite{liu_vision-based_2008}, the authors make three arguments about the trustworthiness of deep learning (DL) systems to prevent the deception of the algorithm: (1) the trustworthiness should be an essential and mandatory component of a DL system for algorithmic decision making; (2) the trust of a DL model should be evaluated along multiple dimensions in terms of its correctness, accountability, transparency, and resilience; and (3) there should be a proactive safeguard mechanisms to enforce the trustworthiness of a deep learning framework.

In this work, the trust metric of an ML model is based on recent and past historical data that measure the degree of agreement of the ML model with other models in an ensemble of ML models.

\subsection{Adversary attacks on ML models}
Recent research shows that ML models trained entirely on private data are still vulnerable to adversarial examples, which have been maliciously altered so as to be misclassified by a target model while appearing unaltered to the human eye \cite{kurakin_adversarial_2016}\cite{elsayed_adversarial_2018}.
Madry \textit{et al.} \cite{madry_towards_2019} propose an approach to study the adversarial robustness of neural networks through the lens of robust optimization, this approach enables to identify methods for both training and attacking neural networks models.  
Finlayson \textit{et al.} \cite{finlayson_adversarial_2018} demonstrate that adversarial examples are capable of manipulating deep learning systems. They synthesize a body of knowledge about the healthcare system across three clinical domains to argue that medicine may be uniquely susceptible to adversarial attacks. 
Huang \textit{et al.} \cite{huang_adversarial_2011} discuss the effective machine learning techniques against an adversarial opponent. They introduce two machine learning models for modeling an adversary’s capabilities and discuss how specific application domain, features and data distribution restrict an adversary’s attacks.
Saadatpanah \textit{et al.} \cite{saadatpanah_adversarial_2019} discuss how the machine learning methods in industrial copyright detection systems are susceptible to adversarial attacks and why those methods are particularly vulnerable to attacks.
Ren \textit{et al.} \cite{ren_adversarial_2020} introduce the theoretical foundations, algorithms, and applications of adversarial attack and defense techniques in deep learning models. 
Chakraborty \textit{et al.} \cite{chakraborty_adversarial_2018} provide a discussion on different types of adversarial attacks with various threat models and also elaborate the efficiency and challenges of recent countermeasures against them.
Akhtar  and Mian  \cite{akhtar_threat_2018} present a comprehensive survey paper on adversarial attacks on deep learning in computer vision. 
Yuan \textit{et al.} \cite{yuan_adversarial_2019} investigate and summarize the approaches for generating adversarial examples, applications for adversarial examples, and corresponding countermeasures for deep neural network models.

\subsection{Automatic Model Selection}

Researchers have proposed various automatic selection methods for ML algorithms. ML model selection is the problem of determining which algorithm, among a set of ML algorithms, is the best suited to the data \cite{forster_key_2000}. Choosing the right technique is a crucial task that directly impacts the quality of predictions. However, deciding which ML technique is well suited for processing specific data is not an easy task, even for an expert, as the number of choices is usually very large \cite{kotthoff_algorithm_2016}. 

Auto-WEKA \cite{thornton_auto-weka:_2013} considers all 39 ML classification algorithms implemented in Weka to automatically and simultaneously choose a learning algorithm. Auto-WEKA uses sequential model-based optimization and a random forest regression model to approximate the dependence of a model's accuracy on the algorithm and hyper-parameter values. Using an approach similar to that in Auto-WEKA, Komer \textit{et al.} \cite{komer_hyperopt-sklearn:_2014} developed the software \textit{hyperopt-sklearn}, which automatically selects ML algorithms and the hyper-parameter values for Scikit-learn. 

In another work, Sparks \textit{et al.} proposed MLbase \cite{sparks_mli:_2013}, an architecture for automatically selecting ML algorithms, that supports distributed computing on a cluster of computers by combining better model search methods, bandit methods, batching techniques, and a cost-based cluster sizing estimator. 

Lokuciejewski \textit{et al.} \cite{lokuciejewski_automatic_2010} presented a generic framework for automatically selecting an appropriate ML algorithm for the compiler generation of optimization heuristics. Leite \textit{et al.} \cite{leite_selecting_2012} proposed a method called active testing for automatically selecting ML algorithms, that exploits metadata information concerning past evaluation results to recommend the best algorithm using a limited number of tests on the new dataset.

Van Rijn \textit{et al.} \cite{van_rijn_fast_2015} proposed a method for automatically selecting algorithms. They addressed the problem of algorithm selection under a budget, where multiple algorithms can be run on the full data set until the budget expires. Their method produces a ranking of classifiers and takes into account the run times of classifiers. 

\subsection{ML models for Resource-constrained Devices}

Researchers have worked on the inference problem on tiny resource-constrained IoT devices, which are not necessarily always-connected to the cloud. Kumar \textit{et al.} \cite{kumar_resource-efficient_2017}\cite{gupta_protonn:_2017} developed tree and k-nearest neighbor based algorithms, called Bonsai and ProtoNN respectively, for classification, regression, ranking, and other common IoT tasks. Their algorithm can be trained on the cloud and then be hosted onto resource-constrained IoT devices based on the Arduino Uno board. Bonsai and ProtoNN maintain prediction accuracy while minimizing model size and prediction costs. Motamedi \textit{et al.} \cite{motamedi_machine_2017} presented a framework for the synthesis of efficient Convolutional Neural Networks (CNN) inference software targeting mobile System on Chip (SoC) based platforms. They used parallelization approaches for deploying a CNN on SoC-based platforms. Meng \textit{et al.} \cite{meng_two-bit_2017} presented Two-Bit Networks (TBNs) approach for CNN model compression to reduce the memory usage and improve computational efficiency in terms of classification accuracy on resource-constrained devices. They utilized parameter quantization for computation workload reduction. Shoeb \textit{et al.} \cite{shoeb_application_2010} present an ML approach on a wearable device to identify epileptic seizures through analysis of the scalp electroencephalogram, a non-invasive measure of the brains electrical activity.

\begin{figure*}
\centering
\includegraphics[width=1\textwidth]{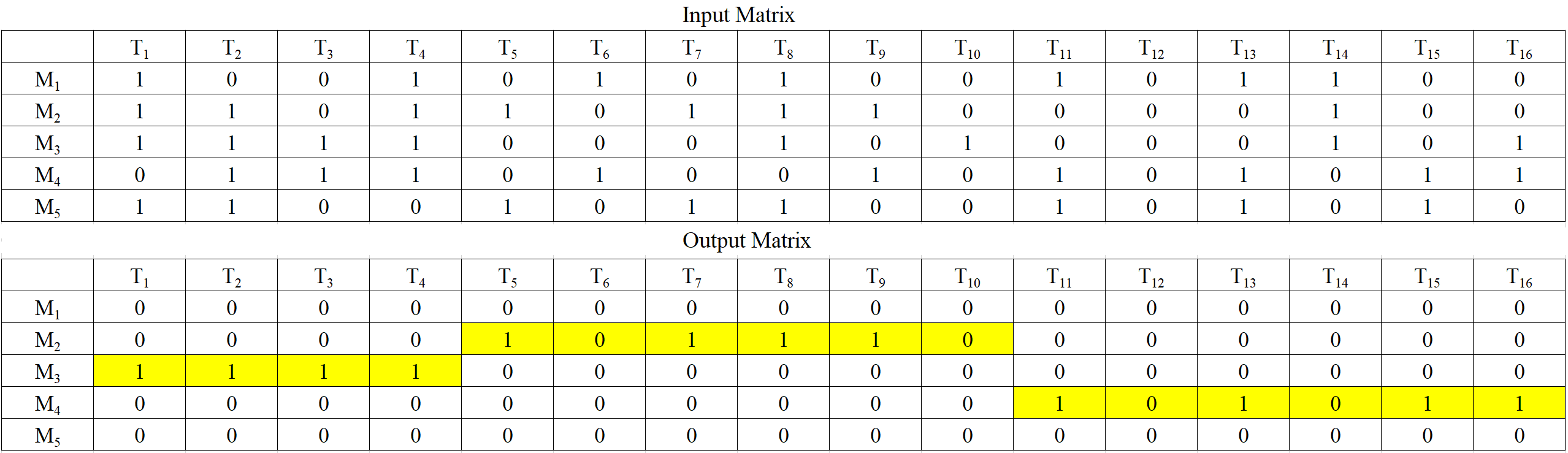}
\caption{An illustration of our proposed splice heuristic work. Here, the splice heuristic selects 3 longest consecutive sequences of 1s segments, then merges the adjacent unselected segments.}
\label{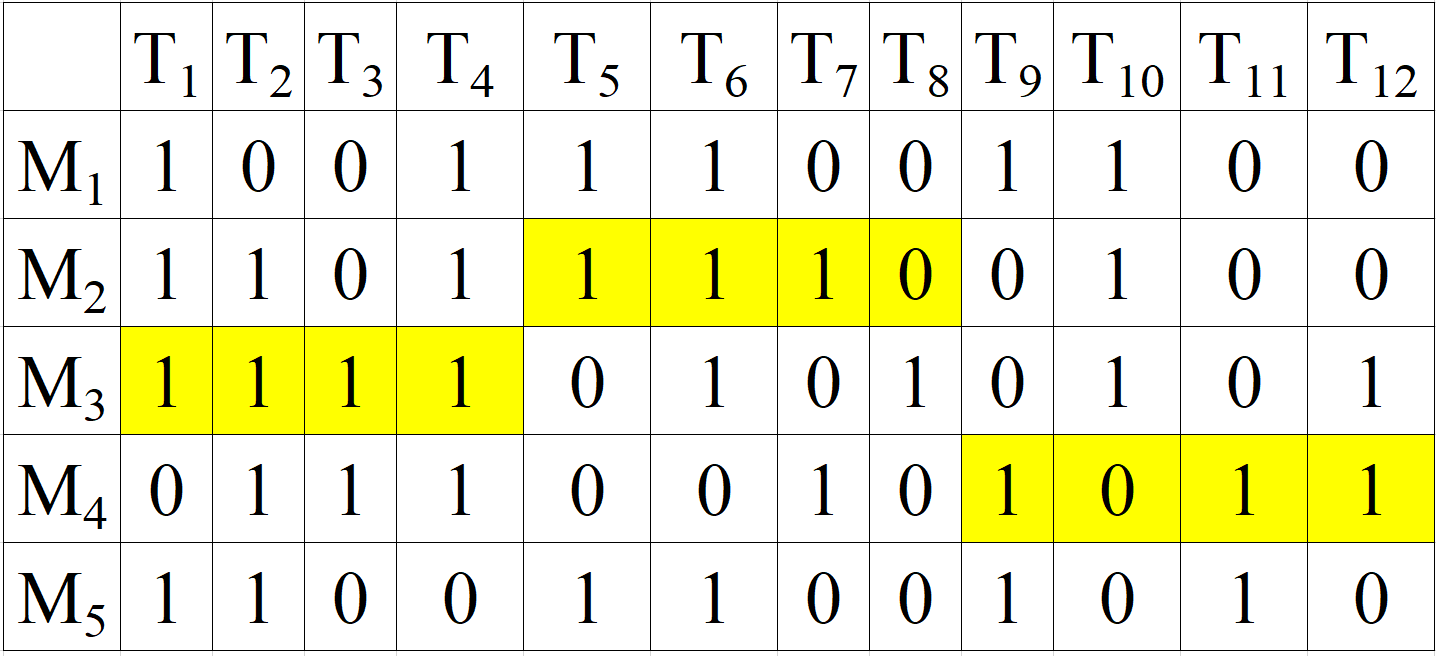}
\end{figure*}

\section{System Model and Problem Formulation}
\label{SystemModelSec}
In this paper, we assume an MLaaS provider that has $M$ ML models from which a subset needs to be selected and deployed on IoT devices for $T$ time slots. $P$ is a constant matrix of size $M$ $\times$ $T$, where element $p_{i,j}$ indicates the trust value obtained by model $i$ at time $j$. This matrix is created based on recent and past historical data that measure the degree of agreement of ML model $i$ with the other $M-1$ models in the ensemble of ML models. $B$ is the maximum number of allowed ML model reconfigurations during $T$ time slots. $A$ is a variable matrix of size $M$ $\times$ $T$, where element $a_{i,j} \in \{0,1\}$. $a_{i,j}=1$ indicates that the model $i$ at time $j$ is trustworthy; $a_{i,j}$ is equal to zero otherwise. In other words, A is a variable selection matrix where a value of 1 in row $r$ column $c$ indicates that model number $r$ is selected at time $c$; otherwise, if the value is 0 then model $r$ is not selected at time slot $c$. Thus, the objective of the formulation is to find the values of $a_{i,j}$ and $p_{i,j}$ such that the selected models maximize the overall trust values during the entire time period as shown in Equation (2).

As our proposed heuristic depends on the prediction output of ML model, it can be used with any supervised ML algorithms as classification and regression problems. In our experiments we use the proposed approach to select trusted LSTM models in regression problems. To compute the trust level of model $i$ at time $j$, we use Equation (\ref{trustLevel_equ}), which assigns a higher trust metric to models that agree more with the average of all models. This equation is inspired from the majority voting approach presented in the literature to quantify the level of trust \cite{li_distributed_2017} \cite{cho_trust_2016} \cite{raya_data-centric_2008} \cite{srinivasan_drbts:_2006} \cite{shahzad_comparative_2013}. Therefore, model $i$ at time $j$ is assigned trust level $p_{i,j}$ that represents the degree of  agreement  (i.e., reciprocal of the degree of deviation  $\frac{\sum_{k=1}^M d(O_{i,j}, O_{k,j})}{M}$) of  model $i$ with  other  models  in  the ensemble  of  ML  models. $d(O_{i,j}, O_{k,j})$ is a function that provides the distance between $O_{i,j}$ and $O_{k,j}$. The trust level metric ranges from $0$ to $p_{max}$ where a higher value indicates a higher level of trust.

\begin{IEEEeqnarray}{l}
\label{trustLevel_equ}
\qquad p_{i,j} = \min \bigg( p_{max} \ , \left[{\frac{\sum_{k=1}^M d(O_{i,j} ,O_{k,j})}{M}}\right]^{-1} \bigg),
\end{IEEEeqnarray}
where $p_{max}$ is the maximum attainable trust level in the given application domain and $p_{i,j}$ is the trust level of model $i$ at time $j$, $O_{i,j}$ is the output of model $i$ at time $j$, and $O_{i,j}$ $\in \mathbb{R}$.



\textbf{Problem Formulation: }\label{problem_formulation} The goal of this work is to maximize the trust level gained by selecting a subset of ML models from a superset of models to be hosted on resource-constrained devices for a period of time $R$, where $0 \leq R \leq T$. The number of reconfigurations is limited to $B$ and the maximum rate of reconfiguration is limited to $R$. We formulate the problem using ILP as follows:

\begin{IEEEeqnarray}{l}
\max\sum_{j=1}^T \sum_{i=1}^M a_{i,j} \cdot p_{i,j}
\label{main_obj}
\\
\mathrm{s.t.} \quad \sum_{i=1}^M a_{i,j}=1 \qquad \forall j \in 1 \ldots T,
\label{main_con_1}
\\
\qquad a_{i,j} \in \{0,1\} \qquad \forall i \in 1 \ldots M \nonumber \\
\qquad \qquad \qquad \qquad \quad \forall j \in 1 \ldots T,
\label{main_con_2} \\
\qquad \frac{1}{2} \cdot \sum_{i=1}^M \sum_{j=2}^{T} |a_{i,j}-a_{i,j-1}| \leq B,
\label{main_con_3} \\
\qquad \frac{1}{2} \cdot \sum_{i=1}^M \sum_{j=k}^{k+\ceil*{\frac{T}{B}}} |a_{i,j}-a_{i,j-1}| \leq R	\nonumber \\
\qquad \qquad \qquad \qquad \forall k \in 1 \ldots (T-\frac{T}{B}),
\label{main_con_4}
\end{IEEEeqnarray}

The first constraint in (\ref{main_con_1}) is to ensure that only one ML model is selected at each time slot, because there will be only one ML model hosted in a resource-constrained device at a time. The second constraint in (\ref{main_con_2}) indicates that this formulation is combinatorial, where the values can either be $0$ or $1$ with $1$ indicating the trustworthiness of the ML model and $0$ indicating that it is not. In order to comply with the maximum number of allowed reconfigurations ($B$), the third constraint in (\ref{main_con_3}) is used. The fourth constraint in (\ref{main_con_4}) restricts the solution to adhere to the models' maximum reconfiguration rate $R$ (i.e., the maximum number of reconfiguration per time unit). Table \ref{Table_1} summarizes the description of the formulation parameters. 

\begin{table}
\centering
\caption{Description of Formulation Parameters}
\label{Table_1}
 \begin{tabular}{|c | p{7cm}|} 
 \hline
 Parameter & Meaning \\ [0.5ex] 
 \hline\hline
 A & Variable matrix of size M$\times$T, $a_{i,j} \in \{0,1\}$  \\
 \hline
  B & Maximum number of allowed ML model configurations  \\
 \hline
  H & Constant value which represents the threshold of maximum trust level value selected from the fractional solution \\
 \hline
  $\epsilon$ & Constant small value that is subtracted from $H$ value   \\
 \hline
  M & Number of ML models   \\
 \hline
  $O_{i,j}$ & Output of the model $i$ at time $j$   \\
 \hline
  P & Constant matrix of size M$\times$T, which represents the trust value obtained by all models at all time slots  \\
 \hline
  $P_{i,j}$ & Trust value obtained by model $i$ at time $j$   \\
 \hline
  $P_{Max}$ & Maximum attainable trust level   \\
 \hline
  R & Maximum rate of reconfiguration   \\
 \hline
  T & The number of time slots   \\[1ex]
 \hline
\end{tabular}
\end{table}

\begin{algorithm}[t]
\caption{\textbf{Splice Heuristic} to find a lower-bound solution}
\label{alg_Splice_Heuristic}
\begin{algorithmic}[1]
\STATEx \textbf{Input}: Matrix $A$ of size $M \times T$ where element $a_{i,j}$ represents the trust level of model $i$ at time slot $j$;\newline
Maximum number of allowed reconfigurations $B$;\newline
Maximum reconfiguration rate $R$.
\STATEx \textbf{Output}: matrix $A$, with each column having only $1$ entry to indicate the selected ML model at the given time slot.
\STATE Mark $A$ as one unselected segment
\STATE Set $i=0$
\WHILE {$i \leq B$ AND number of unselected segments $>$ 0}
\STATE Set $\textit{flag}$ = False
\STATE Identify unselected segment $j$ with at least $R$ columns that has the longest consecutive sequence of $1$s in row $k$.
\IF {Segment $j$ exists}
\STATE Set all entries of row $k$ to $1$, and set all entries of other rows of segment $j$ to $0$
\STATE Mark segment $j$ as selected
\IF {$w$ is a selected segment that is adjacent to $j$ and both have $1s$ in the same row}
\STATE Merge segments $w$ and $j$
\STATE Set $\textit{flag}$ = True
\ENDIF
\ENDIF
\IF {$\textit{flag}$ = False}
\STATE Set $i=i+1$
\ENDIF
\ENDWHILE
\STATE Merge adjacent unselected segments into one
\FOR {every unselected segment $j$}
\STATE Set $\textit{leftSum}$ = 0, $\textit{rightSum}$ = 0, $\textit{selectedRow}$ = 0
\IF {there is a selected segment $w$ with selected row $k$ left adjacent to segment $j$}
\STATE Set $\textit{leftSum}$ = sum of values of row $k$ in segment $j$
\STATE Set $\textit{selectedRow}$ = $k$
\ENDIF
\IF {there is a selected segment $w$ with selected row $z$ right adjacent to segment $j$}
\STATE Set $\textit{rightSum}$ = sum of values of row $z$ in segment $j$
\IF {$\textit{rightSum}$ $>$ $\textit{leftSum}$}
\STATE Set $\textit{selectedRow}$ = $z$
\ENDIF
\ENDIF
\STATE Set all entries of $\textit{selectedRow}$ of segment $j$ to $1$ and all entries of the other rows to $0$
\ENDFOR
\STATE Return $A$ as the best solution.
\end{algorithmic}
\end{algorithm}

\section{Proposed Solution}
\label{ProposedSolSec}
In this section, we discuss the proposed algorithms for the lower
bound and competitive solution along with the upper bound algorithm. We also illustrate the proof of NP-completeness of selecting a subset of ML models from a superset of ML models in order to maximize the trust level of ML models.

\subsection{Lower Bound}

To find a lower bound solution, we propose the \textbf{Splice Heuristic} shown in Algorithm (\ref{alg_Splice_Heuristic}). The heuristic accepts $A$, a matrix of size $M \times T$, where the element $a_{i,j}$ represents the trust level of model $i$ at time slot $j$. Initially, the heuristic considers $A$ as one unselected segment. Next, the heuristic iteratively uses three steps. In the first step, for each unselected segment that is at least $R$ in length, the heuristic finds the model (row) $k$ with the longest consecutive sequence of $1$s (i.e. the highest trust level). In the second step, the segment that has the highest trust level is marked as selected. Additionally, the row $k$ is selected by setting all the values in row $k$ to $1$ and in rows other than $k$ to $0$. The third step merges adjacent selected segments (from the previous rounds) into a single selected segment if they share the same selected model. 

These three steps are repeated until at most $B$ segments are selected or on unselected segments are left. Finally, the heuristic identifies unselected segments, if such segments exist. For each unselected segment, the heuristic finds the trust level using the highest trust level from a selected adjacent segment, if one exists. Finally, the heuristic compares the trust level resulting from the adjacent ML models (if they exist) and chooses the one with the highest trust level.

The example in Figure \ref{Fig_4.png} illustrates the details of our proposed Splice heuristic.  In this example, we assume that $R = 4$, $B = 2$. Consequently, the heuristic selects $B + 1 = 3$ segments that maximize the trust level. The first section covers time slots $T_1 - T_4$ with the selected ML model $M_3$. $M_2$ is selected in the second segment, which covers the time slots $T_7 - T_{10}$. Finally, the last segment has $M_4$ selected in the time slots $T_{13} - T_{16}$. After that the heuristic determines which ML model to use for the remaining unselected segments. For the time slots $T_5 - T_6$, $M_2$ is selected based on the selected adjacent segment to the right. In addition, for the time slots $T_{11} - T_{12}$, $M_4$ is selected based on the selected adjacent segment to the right.

\subsection{Upper Bound}

We relax the ILP formulation presented in Section \ref{problem_formulation} to a Linear Programming (LP) problem by replacing the constraint (\ref{main_con_2}) with 
\begin{IEEEeqnarray}{l}
\qquad a_{i,j} \in [0,1], \qquad \forall i \in 1 \ldots M \label{main_con_2_r}.
\end{IEEEeqnarray}

This relaxed formulation produces an upper bound solution for our problem.

\begin{algorithm}[t]
\caption{\textbf{Fixing Heuristic} to produce a competitive solution}
\label{alg_Fixing_Heuristic}
\begin{algorithmic}[1]
\STATEx \textbf{Input}: Matrix $A$ of size $M \times T$ where element $a_{i,j}$ represents the trust level of model $i$ at time slot $j$;\newline
Maximum number of allowed reconfigurations $B$;\newline
Maximum reconfiguration rate $R$;\newline
Maximum trust level selected from fractional solution $H$;\newline
Epsilon $\epsilon$, a small value subtracted from $H$.\newline 
\STATEx \textbf{Output}: matrix $A$ with each column having only one value as $1$ to indicate the selected ML model at the given time slot.
\STATEx $\qquad \qquad \qquad \qquad$ \textbf{PART I - Fixing}
\STATE Set $XSplice=A$
\STATE Run the Splice heuristic on matrix $XSplice$
\STATE Set $\textit{PreviousTrustLevel}$ = element-wise sum of $A \And XSplice$ where $\And$ is the bitwise AND operator
\STATE Set $\textit{XFraction} = A$ and apply linear programming to generate a fractional solution
\STATE Set $X = \textit{XFraction}$
\STATE Compute $\textit{CurrentTrustLevel}$ using PART II
\WHILE {$H > 0$ AND equation 1 through 3 are satisfied AND $\textit{CurrentTrustLevel}$ $>$ $\textit{PreviousTrsutLevel}$}
\STATE Set $\textit{PreviousTrustLevel}$ = $\textit{CurrentTrustLevel}$
\STATE Set $H = H - eps$
\STATE Set $X = \textit{XFraction}$
\STATE Compute $\textit{CurrentTrustLevel}$ using PART II
\ENDWHILE
\STATE Return $X$ as the best solution.
\STATEx
\STATEx $\qquad \qquad$ \textbf{PART II - Computing CurrentTrustLevel}
\STATE Set $\textit{rowNum}$ = -1
\FOR {$t$ from $t_0$ to $T$}
\IF {maximum value from column $t$ of matrix $X \geq$ H}
\STATE Set this maximum value to $1$ and set rest of values in column $t$ to $0$
\STATE Set $\textit{rowNum}$ = row number of the maximum value 
\ELSIF {$\textit{rowNum} = -1$}
\STATE Set the maximum value to $1$ and set rest of values in column $t$ to $0$
\ELSE
\STATE Set the value at $\textit{rowNum}$ to $1$ and set rest of values in column $t$ to $0$
\ENDIF
\ENDFOR
\STATE Set $\textit{CurrentTrustLevel}$ = element-wise sum of $A \And X$ where $\And$ is the bitwise AND operator
\end{algorithmic}
\end{algorithm}

\subsection{Competitive Solution}

To produce a competitive solution, we propose the \textbf{Fixing Heuristic} shown in Algorithm (\ref{alg_Fixing_Heuristic}). The algorithm accepts the matrix $A$, of dimensions $M \times T$ where the element $a_{i,j}$ represents the trust level of model $i$ at time slot $j$. The heuristic selects a maximum of $B+1$ ML models (which results in a maximum of $B$ model reconfigurations) to be used during $T$ in order to maximize the overall trust level. The proposed heuristic employs two constants: (1) a threshold $H$ that represents the maximum trust level selected from the fractional solution ($0 <$$H$$ < 1$); and (2) epsilon $\epsilon$ which is a small value that is subtracted from the value of $H$ during each iteration of the fixing process ($0<$$\epsilon$$<0.1$).

The proposed heuristic finds the lower bound solution first using the Splice heuristic \ref{alg_Splice_Heuristic} on matrix $A$. Next, the proposed fixing heuristic applies LP on matrix $A$ to find a fractional upper bound solution using $H$. Actually, the ML model with the highest trust level in each time slot of $A$ is compared with $H$. The highest trust level is rounded to $1$ if it is greater than or equal to $H$ while setting all other ML models to $0$ during that time slot. The same process is applied for trust levels less than $H$. If the highest trust level is less than $H$ in any time slot, the selected ML model in the previous time slot is selected for this time slot and is rounded to $1$ while other ML models are set to $0$. After converting the matrix into a binary one (i.e., $0$ or $1$ entries), the upper bound solution is computed by counting the number of entries in $A$ that are set to $1$. If the upper bound solution is found to be greater than the lower bound solution, the lower bound solution is set to the value of the upper bound solution. Also, $H$ is reduced by $\epsilon$ and the upper bound solution is recomputed in the hope of finding a better solution. This process is repeated as long as the upper bound solution is improved.

Because our proposed algorithm depends on the solution produced by Linear Programming (LP), which can be solved using the Simplex algorithm, then the complexity of our proposed algorithm is similar to the complexity of the Simplex algorithm which has polynomial-time complexity under various probability distributions.

\subsection{Proof of NP-completeness}

In this section, we show that the problem discussed in this paper can be reduced from the decision version of the set cover problem, which is known to be NP-complete.

We define the universe $\mathcal{U}$ as a set of tuples $(i, j), i,j \in T$ and $i \leq j$. Each tuple $(i, j)$ represents a time interval that starts at time $i$ and ends at time $j$ during which the system uses the same model without any reconfigurations. We also define $\mathcal{S}$ as a family of subsets of $\mathcal{U}$. The union of $\mathcal{S}$ results in a period that covers $\mathcal{U}$. In other words, the union of $\mathcal{S}$ results in a period that starts at time $0$ and ends at time $T$. Now the cardinality of $S$ is represented as follows:

\begin{IEEEeqnarray}{l}
0 \leq ||S|| \leq \left[\sum_{i=1}^T  \binom{T}{i}\right] * M.
\label{main_obj_r}
\end{IEEEeqnarray}

If $k$ represents the maximum number of model reconfigurations, the objective of our problem is to find $k$ subsets from $\mathcal{S}$ while maximizing the total trust level. This problem is similar to the decision version of the set cover problem. The universe $\mathcal{U}$ and the set $\mathcal{S}$ of our problem are the same as the universe $\mathcal{U}$ and set $\mathcal{S}$ in the set cover problem. However, in our problem, every element is a tuple. The maximum number of model reconfigurations $k$ is the same as the integer number $k$ in the set cover problem. Consequently, the problem introduced in this paper is NP-complete.

\subsection{Worst-Case Analysis (Competitive Ratio Analysis)}

The performance of our proposed fixing heuristic is at least as good as that of the splice heuristic. Consequently, the worst case scenario is encountered when the proposed heuristic performs as the splice heuristic. When the maximum number of allowed reconfigurations is set to $B$, the splice heuristic finds ($B+1$) segments, each of length $R$, that provide the maximal trust level.

\begin{prop}
For any configuration $X$, the splice heuristic's worst-case performance has a competitive ratio of $O(1)$ when $R$ is proportional to $T$ and $B$ is constant.
\end{prop}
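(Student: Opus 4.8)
The plan is to bound the ratio between the optimal (ILP) objective and the splice heuristic's objective directly, exploiting the structural constraints that $B$ is constant and $R = \Theta(T)$. First I would observe that the optimal solution $\mathrm{OPT}$ also uses at most $B$ reconfigurations, so it is itself partitioned into at most $B+1$ contiguous blocks, on each of which a single model is held; within each such block the per-slot trust contributed is at most $p_{\max}$, so $\mathrm{OPT} \le (B+1)\, p_{\max}\, L_{\max}$ where $L_{\max}$ is the longest block length, which is at most $T$. On the other side, I would lower-bound the splice heuristic's value: it selects $B+1$ segments each of length exactly $R$, and on each it picks the row with the longest run of $1$s; since every column has at least one model achieving a positive trust level (the matrix $A$ records trust levels, and some model is always ``most trustworthy''), the heuristic collects at least some constant fraction of $R$ worth of trust per segment. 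Even crudely, the heuristic's value is $\Omega(1)$ per selected segment when there is a run of length $\Omega(R)$ in some row, giving $\mathrm{HEU} = \Omega(B \cdot 1)$ or better.

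The key step is then to form the ratio $\mathrm{OPT}/\mathrm{HEU}$ and show it is $O(1)$. With $B$ constant, the factor $(B+1)$ in the numerator is a constant. With $R$ proportional to $T$, say $R = cT$ for a fixed $c \in (0,1]$, the longest block length $L_{\max} \le T = R/c$ is within a constant factor of the segment length the heuristic uses, so the ``length gap'' between what $\mathrm{OPT}$ can exploit per block and what the heuristic exploits per segment is also a constant. Multiplying these two constants yields a competitive ratio independent of $T$, i.e.\ $O(1)$. I would write this as: for any instance $X$,
\begin{IEEEeqnarray}{l}
\frac{\mathrm{OPT}(X)}{\mathrm{HEU}(X)} \;\le\; \frac{(B+1)\, p_{\max}\, T}{(B+1)\cdot \gamma \, R} \;=\; \frac{p_{\max}\, T}{\gamma\, c\, T} \;=\; \frac{p_{\max}}{\gamma\, c} \;=\; O(1),
\end{IEEEeqnarray}
where $\gamma > 0$ is the (instance-independent) fraction of a segment's length that the longest-run row is guaranteed to supply.

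The main obstacle I anticipate is justifying the lower bound on $\gamma$: it is \emph{not} true in general that some row has a $1$-run of length $\Omega(R)$ in a given segment — the $1$s could be scattered. To make the argument rigorous I would need either (a) an assumption that within any window of $R$ slots some model is consistently top-ranked for a constant fraction of the window (a mild ``persistence of trust'' assumption, reasonable since trust is computed from smoothly varying historical data), or (b) to reinterpret the heuristic's third phase — the merging and the final filling of unselected segments by adjacent selected models — as recovering additional trust, and to argue $\mathrm{OPT}$ on the scattered-$1$s regime is also small. I would also need to be careful that the rate constraint (\ref{main_con_4}) does not force $\mathrm{OPT}$ to waste reconfigurations, which only helps the heuristic. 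Handling the degenerate cases where the matrix is all zeros (the fail-safe regime, where both $\mathrm{OPT}$ and $\mathrm{HEU}$ are $0$ and the ratio is defined to be $1$) would round out the argument.
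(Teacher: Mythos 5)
There is a genuine gap, and you have put your finger on it yourself: your entire bound rests on the existence of an instance-independent constant $\gamma>0$ such that the heuristic collects at least $\gamma R$ trust in every selected segment. As you concede, no such $\gamma$ exists for an arbitrary configuration $X$ --- the $1$s in the trust matrix can be scattered so that the longest run in any row of a length-$R$ window is $o(R)$ --- and the proposition is stated ``for any configuration $X$,'' so rescuing the argument with a ``persistence of trust'' assumption changes the statement rather than proving it. A second, related problem is that your multiplicative bounds are mismatched in kind: you upper-bound $\mathrm{OPT}$ by $(B+1)\,p_{\max}\,T$ (which is looser than the trivial $p_{\max}T$, since only one model is active per slot) and lower-bound $\mathrm{HEU}$ by an absolute quantity, so even if $\gamma$ existed the resulting constant $p_{\max}/(\gamma c)$ says nothing about how close the heuristic is to optimal on the instances where both are small.

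The paper's proof avoids needing any absolute lower bound on the heuristic's value by comparing $\mathrm{ALG}$ and $\mathrm{OPT}$ \emph{piecewise} on the same instance. It splits the horizon into the $B+1$ selected segments of total length $R(B+1)$, on which it argues $\mathrm{ALG}$ and $\mathrm{OPT}$ coincide (both take the maximal-trust runs), and the unselected remainder of length $T-R(B+1)$, on which it argues $\mathrm{ALG}$, being restricted to one of the two adjacent segments' models, recovers at least half of what $\mathrm{OPT}$ could obtain there. Summing gives
\begin{IEEEeqnarray}{l}
\frac{\mathrm{ALG}(X)}{\mathrm{OPT}(X)} \;\geq\; \frac{R(B+1)+\tfrac{1}{2}\left[T-R(B+1)\right]}{T} \;=\; \frac{1}{2}\left[1+\frac{R(B+1)}{T}\right],
\end{IEEEeqnarray}
a relative guarantee that holds (modulo the paper's own informal steps) regardless of how sparse or scattered the $1$s are. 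If you want to repair your write-up, replace the absolute lower bound $\gamma R$ with this kind of segment-by-segment comparison against what $\mathrm{OPT}$ itself achieves on each piece; the degenerate all-zeros case you mention is then handled automatically, since both sides of each piecewise comparison vanish together.
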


\begin{proof}
Let ALG be the splice heuristic and OPT be the optimal algorithm. The first part of the splice heuristic (Algorithm \ref{alg_Splice_Heuristic}), specifically steps 1 through 17, finds the segments with the longest consecutive sequence of $1$s. Actually, both ALG and OPT select those segments since they have the largest sum of values (i.e., maximum trust levels). Specifically, those segments have a total length of $R(B+1)$. However, the two approaches differ in the rest of the solution, which is the unselected segments in ALG. Now, at the end of the first part and in the worst-case scenario, Algorithm (\ref{alg_Splice_Heuristic}) may already have performed $B$ reconfigurations and cannot use more reconfigurations. In other words, for every unselected segment, Algorithm (\ref{alg_Splice_Heuristic}) can only use either of the selected models in the adjacent selected segments but never a different model. Consequently, in the worst-case scenario, the two models in the selected segments adjacent to the unselected segment are different. Thus, the second part of Algorithm (\ref{alg_Splice_Heuristic}), specifically steps 17 through 32, will pick the model that has the largest sum in the unselected segment. In the worst-case scenario, both the left and right adjacent selected segments may have the same value when both used in the unselected segment and therefore Algorithm (\ref{alg_Splice_Heuristic})'s maximum loss is half the segment length. However, the loss can never be less than half the segment length. Mathematically, in the second part of the solution, OPT achieves a maximum of $T-R(B+1)$ while ALG achieves a minimum of $\frac{1}{2}[T-R(B+1)]$. Consequently, the following proof is concluded as follows.

Competitive Ratio = $\frac{ALG(X)}{OPT(X)}$
\begin{IEEEeqnarray*}{l}
=\frac{R(B+1)+\frac{1}{2}[T-R(B+1)]}{R(B+1)+[T-R(B+1)]}\\
= \frac{\frac{1}{2}T+\frac{1}{2}R(B+1)}{T}\\
= \frac{1}{2}\frac{T+R(B+1)}{T}\\
= \frac{1}{2}\left[\frac{T}{T}+\frac{RB}{T}+\frac{R}{T}\right]\\
= O\bigg(\frac{R(B+1)}{T}\bigg)
\end{IEEEeqnarray*}
This competitive ratio is $O(1)$ when $R$ is proportional to $T$ and $B$ is constant.
\end{proof}

\begin{figure*} 
 \centering
\includegraphics[width=1\textwidth]{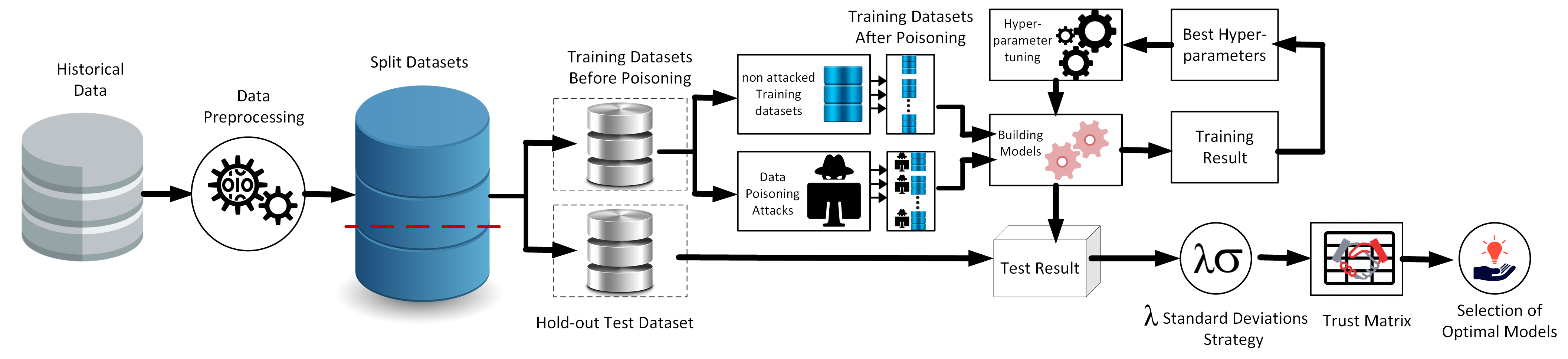}
\caption{The data processing pipeline utilized in our experimental studies starting from the data collection phase and ending with the selection of trustworthy ML models.}
\label{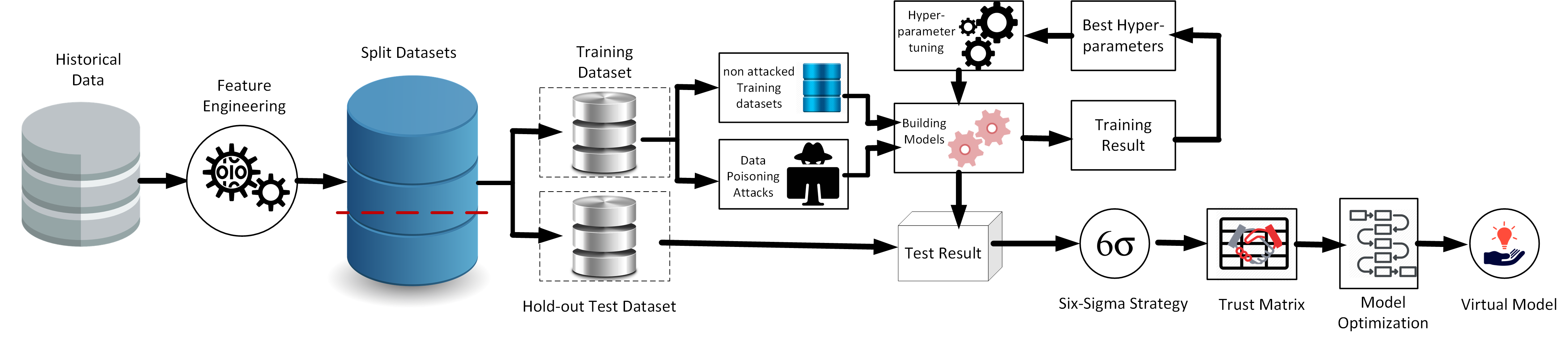}
\end{figure*}

\section{Performance Evaluation}
\label{PerformanceEvaSec}

In order to evaluate the performance of the proposed heuristic, we designed and implemented the data processing shown in Figure \ref{Fig_5.png}. In our experiments, we focused on two case studies that serve as proxies for smart city and IIoT services. In our experiments, we trained multiple ML models using sampled experimental datasets to simulate multiple service providers sending ML models to resource-constrained devices.
 

\subsection{Experimental Setup}

The first case study is a proxy for smart city services in which the City Pulse EU FP7 project \cite{noauthor_citypulse_nodate} dataset is used for traffic prediction. This dataset conveys the vehicular traffic volume collected from the city of Aarhus, Denmark, observed between two points for a set duration of time over a period of $6$ months.
 
The second case study is a proxy for IIoT services in which the Turbofan engine degradation simulation dataset, provided by the Prognostics CoE at NASA Ames \cite{saxena_turbofan_2008}, is used for predicting the remaining useful life of engines. Engine degradation simulation was carried out using a C-MAPSS tool. The goal is to predict the remaining useful life, or the remaining number of cycles before the turbofan engine reaches a level that no longer performs up to requirements. The requirement is based on data collected from sensors located on the turbofan and also on the number of cycles completed. The prediction helps to plan maintenance in advance. The training data consists of multiple multivariate time series with ``cycle'' as the time unit, together with $21$ sensor readings for each cycle. Each time series can be assumed as being generated from a different engine of the same type. The testing data has the same data schema as the training data. The only difference is that the data does not indicate when the failure occurs. Finally, the ground truth data provides the number of remaining work cycles for the engines in the test data. Table \ref{Table_2} shows the description of datasets for both use case studies. 

\begin{table}[h]
\centering
\scriptsize
\caption{Description of datasets for case studies }
\label{Table_2}
 \begin{tabular}{|p{1cm} | p{2cm} | p{2cm} | p{2cm} |} 
 \hline
Dataset & Number of records in each training sample & Number of records in testing set & Number of features\\ [0.5ex] 
 \hline\hline
Traffic volume & One week of hourly counting the number of vehicles & Seven weeks of counting the number of vehicles & 12 lags of number of vehicles \\
\hline
  Turbofan engine degradation & 3716 records sampled from 250 engines & 1800 records sampled from 250 engines & 27 features include engine id, cycle number, 3 settings,  21 sensors readings, and remaining useful life (RUL) \\ [1ex]
\hline
\end{tabular}
\end{table}

Each dataset is divided into training and testing subsets. Each training dataset is sampled into $27$ different datasets that we used to train $27$ deep LSTM models ($17$ of those models are benign and 10 are malicious, 20\% of the training data of the malicious models are poisoned with causative attacks). Even though it is possible to sample our experimental datasets differently to produce a higher/lower number of machine learning models, our choice of $27$ was based on exploratory experiments designed to explore the maximum number of ML models that can be produced from our experimental datasets without affecting the accuracy of the generated models. Specifically, we use the swap $x$ and $100-x$ percentiles attack model as a causative attack to intentionally poison the learners' classifications by altering the labels of the training dataset. In the swap $x$ and $100-x$ percentiles attack, the $x$ percentile value is exchanged with the $100-x$ percentile value. As an example of the swap $x$ and $100-x$ percentiles attack, consider the numeric dataset in Figure \ref{PercentileExample}. To find the $i^{th}$ percentile, we need to sort the values in the unsorted list in ascending order. Next, we multiply $i\%$ by the total number of items in the list (i.e., $10$ items). Now, for example, let us find $20^{th}$ and $80^{th}$ percentiles in the list. $20^{th}$ percentile = 0.2 $\times$ 10 = 2 (item index), which is value $174$ in the list. $80^{th}$ percentile = 0.8 $\times$ 10 = 8 (item index), which is value $188$ in the list. Now, to swap the $x$ and $100-x$ percentiles in this dataset, every $174$ will be replaced with $188$, and every $188$ will be replaced with $174$ in the region in which we want to introduce the swap $x$ and $100-x$ percentiles attack. 
 
 \begin{figure} [h]
\centering
\includegraphics[width=0.4\textwidth]{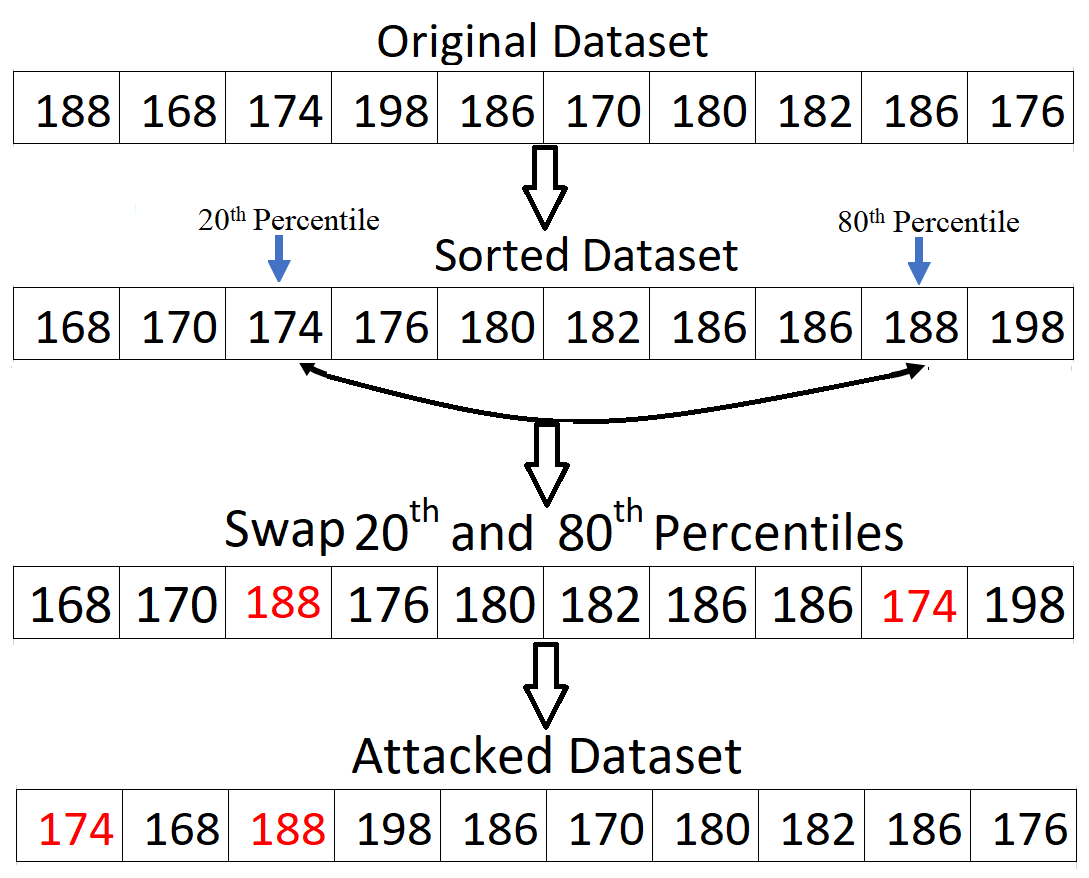}
\caption{Example of swap $x$ and $100-x$ percentiles attack model.}
\label{PercentileExample}
\end{figure}

Since our goal in this paper is to assess the trust level of LSTM models, we used grid search to tune the number of hidden layers, the number of neurons in a layer, the batch size, and the activation function parameters that play a major role in the building of LSTM models \cite{qolomany_parameters_2017} \cite{qolomany_role_2017} \cite{qolomany_leveraging_2019}. ML models are trained using different configurations. Each configuration includes different values for the number of hidden layers, the number of neurons in each layer, and activation functions. Finally, we select the configuration that gives the best accuracy. Table \ref{Table_3} shows the ranges of the configuration parameters used in our experiments to generate the LSTM models. 

\begin{table} [h]
\centering 
\caption{Configuration parameter ranges}
\label{Table_3}
 \begin{tabular}{|c | c|} 
 \hline
 Parameter & Value \\ [0.5ex] 
 \hline\hline
 Number of hidden layers & [1--6]  \\
 \hline
  Number of Neurons & [4--1024]  \\
 \hline
  Activation function & Rectified Linear Unit (ReLU) \\
 \hline
  Batch size & [72--200]\\
 \hline
 Epochs  & [10-50]   \\ 
 \hline
\end{tabular}
\end{table}

After building the models, we evaluated our model selection approach on two experimental datasets. Every row in the traffic dataset represents the number of vehicles during a specific hour. On the other hand, a row in the Turbofan engine dataset represents the remaining useful life during a specific cycle. Figures \ref{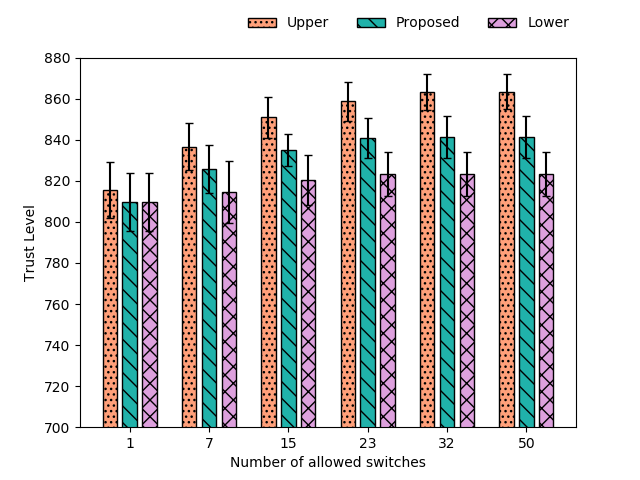} to \ref{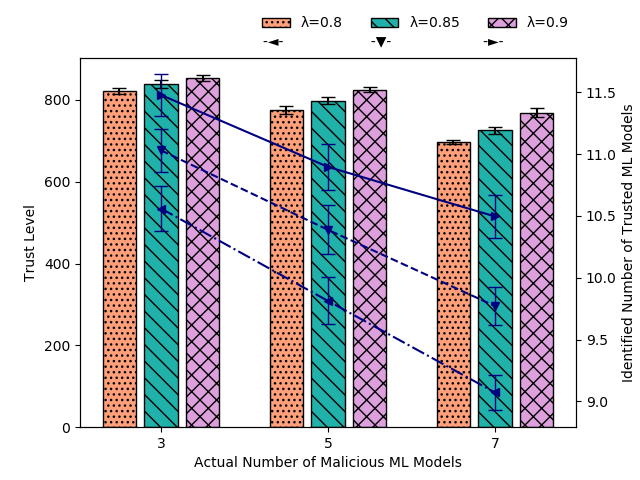} for the smart city traffic flow prediction use-case and Figures \ref{Fig1_IIoT.png} to \ref{Fig4_IIoT.png} for the IIoT predictive maintenance use-case show that the trust level varies between the two datsets. This is because the number of the observations in the test set is different for the two experimental datasets. For the traffic dataset, the number of observations is $\sim$900 while for the  turbofan dataset the number of observations is $\sim$1800. Next, we utilize $\lambda$ standard deviations strategy, which is inspired by the Six Sigma strategy  \cite{pyzdek_six_2003} to exclude the malicious models by identifying and removing the causes of defects and minimizing variability using statistical methods (namely, the mean and the standard deviation as shown in Equations (\ref{eq_outUpper}) and (\ref{eq_outLower})), which leads to better trust prediction models. 

\begin{IEEEeqnarray}{l}
\label{eq_outUpper}
OutUpper = \mu +  \lambda \times \sigma
\end{IEEEeqnarray}
\begin{IEEEeqnarray}{l}
\label{eq_outLower}
OutLower = \mu - \lambda \times \sigma
\end{IEEEeqnarray}

Every time step, we compute $\mu$, which is the mean of the outputs of all models. Also, we compute $\sigma$, the standard deviation of the outputs of all models. $\lambda$ defines the model exclusion strategy (i.e., any model that has an output that is $>$ $\mu + \lambda \times \sigma$ or is $<  \mu - \lambda \times \sigma$ is excluded). The $\lambda$ standard deviations strategy produces a trust matrix of size $M \times T$, with $1$ indicating a trusted model and $0$ indicating a malicious model. The resulting matrix is then used as the input (i.e., matrix $A$) for the proposed fixing heuristic.

\subsection{Experimental Results}

In this section, we discuss the results of using the proposed fixing heuristic along with the lower bound and upper bound heuristics on the two datasets introduced in the previous section.

\begin{figure*}[!h]
\centering
\minipage[b]{0.47\linewidth}
\includegraphics[width=\linewidth, height=5.6cm]{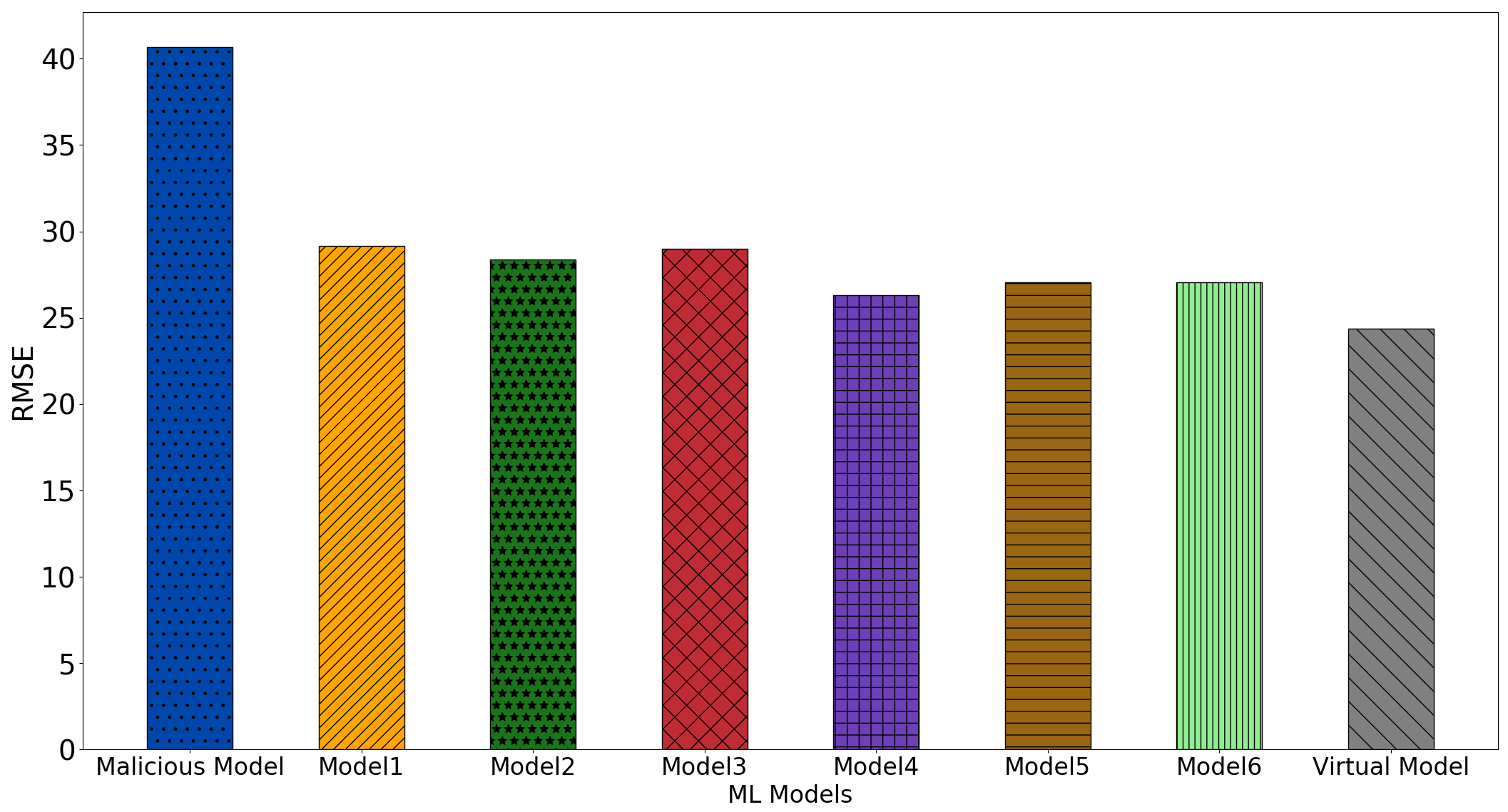}
\caption{Smart city traffic flow prediction use-case: RMSE of the models using the fixing heuristic vs. individual models.}
\label{Fig0_SCpng}
\endminipage\hfill
\minipage[b]{0.49\linewidth}
\includegraphics[width=\linewidth, height=6.5cm]{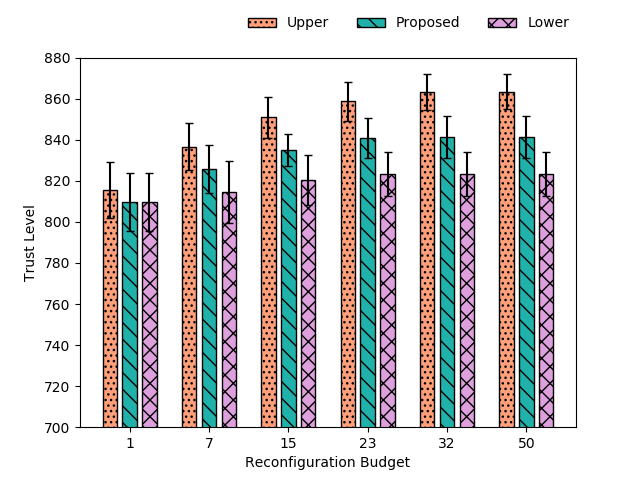}
\caption{Smart city traffic flow prediction use-case: Trust level of upper bound, lower bound, and proposed heuristics.}
\label{Fig1_SC.png}
\endminipage
\end{figure*}


\begin{figure*}[!h]
\centering
\minipage[b]{0.47\textwidth}
\includegraphics[width=\textwidth]{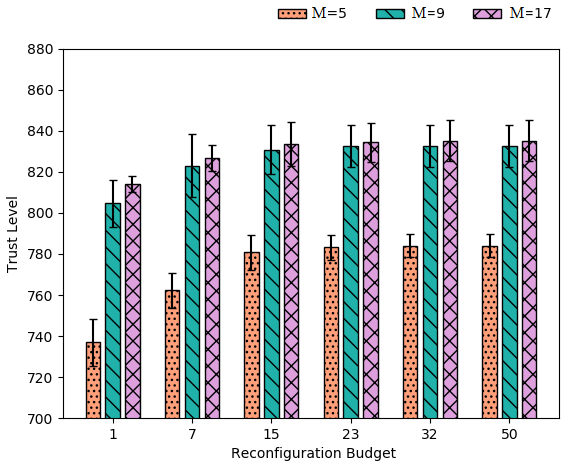}
\caption{Smart city traffic flow prediction use-case: The effect of the number of selected models on the trust level.}
\label{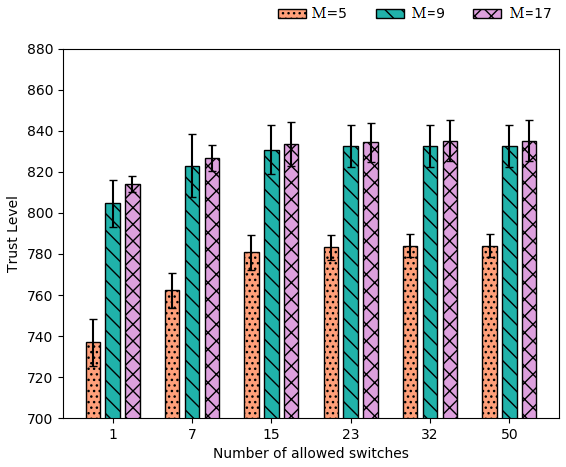}
\endminipage\hfill
\minipage[b]{0.47\textwidth}
\includegraphics[width=\textwidth]{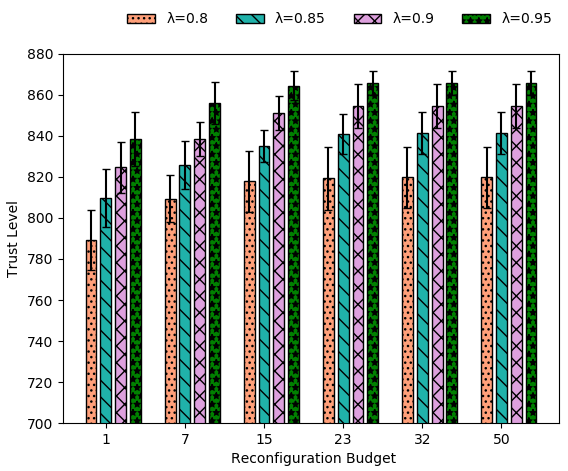}
\caption{Smart city traffic flow prediction use-case: The effect of $\lambda$ on the trust level.}
\label{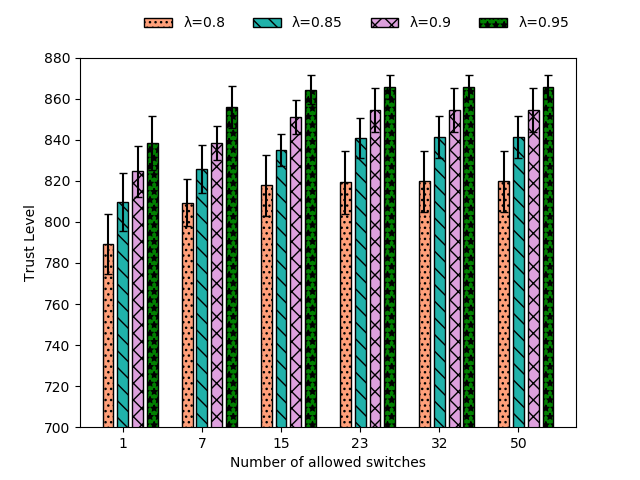}
\endminipage
\end{figure*}


\begin{figure*}[!h]
\centering
\minipage[b]{0.46\textwidth}
\includegraphics[width=\textwidth]{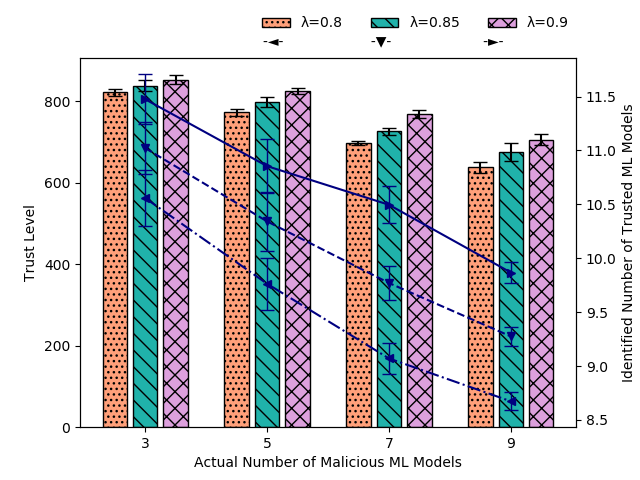}
\caption{Smart city traffic flow prediction use-case: The effect of malicious models on the trust level.}
\label{Fig4_SC.png}
\endminipage\hfill
\minipage[b]{0.45\textwidth}
\includegraphics[width=\textwidth, height=5.7cm]{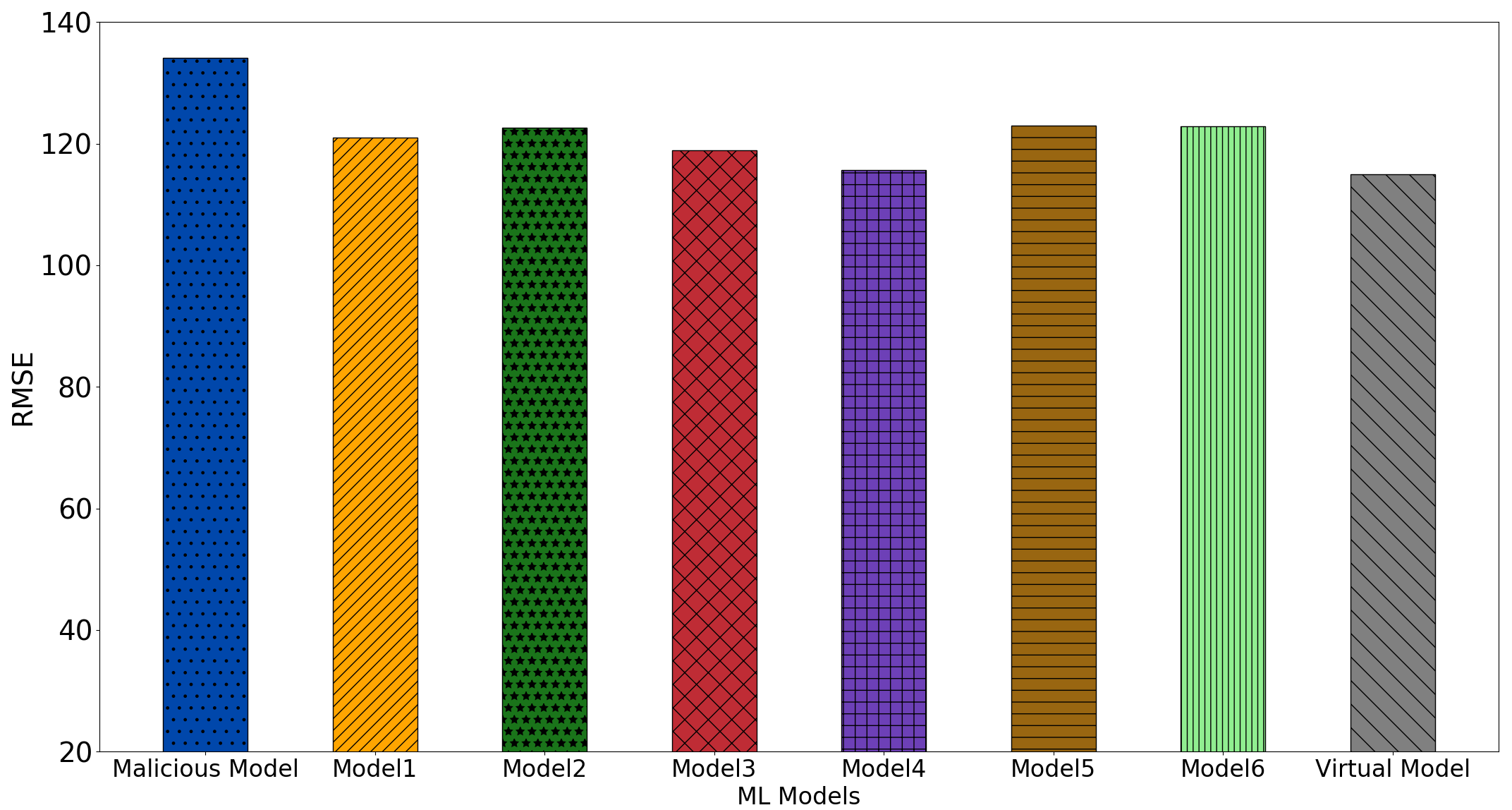}
\caption{IIoT predictive maintenance use-case: RMSE using the fixing heuristic vs. individual models.}
\label{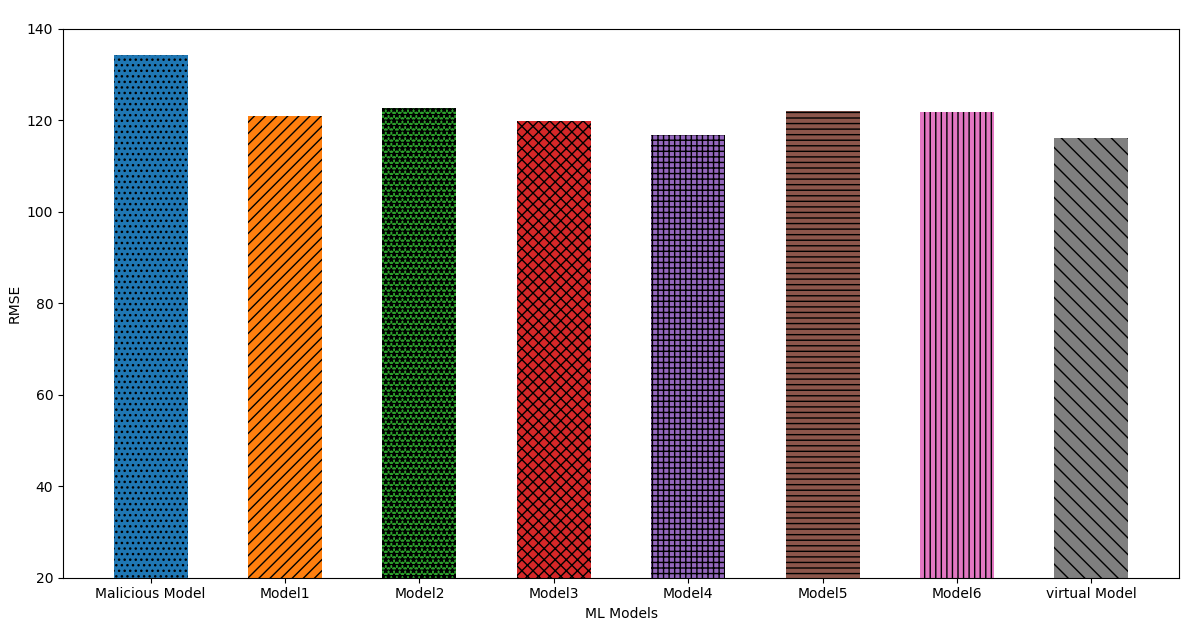}
\endminipage
\end{figure*}

\begin{figure*}[!h]
\centering
\minipage[b]{0.49\textwidth}
\includegraphics[width=\textwidth]{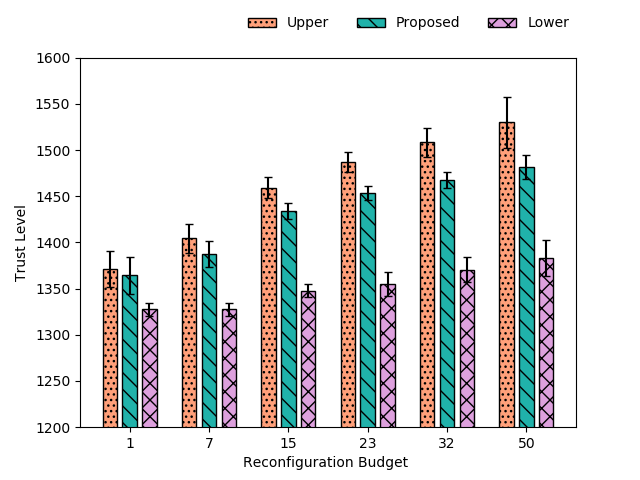} 
\caption{IIoT predictive maintenance use-case: Trust level of upper bound, lower bound, and proposed heuristics.}
\label{Fig1_IIoT.png}
\endminipage\hfill
\minipage[b]{0.45\textwidth}
\includegraphics[width=\textwidth]{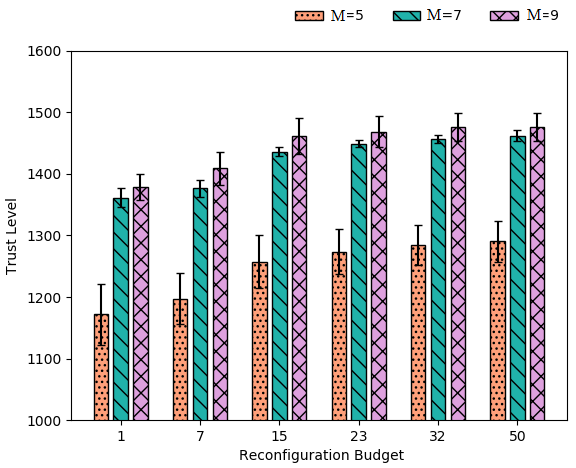}
\caption{IIoT predictive maintenance use-case: The effect of the number of selected models on the trust level.}
\label{Fig2_IIoT.png}
\endminipage
\end{figure*}

\begin{figure*}[!h]
\centering
\minipage[b]{0.45\textwidth}
\includegraphics[width=\textwidth]{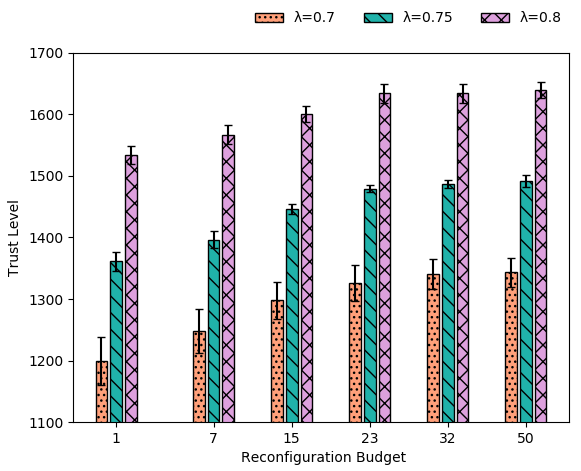}
\caption{IIoT predictive maintenance use-case: The effect of $\lambda$ on the trust level.}
\label{Fig3_IIoT.png}
\endminipage\hfill
\minipage[b]{0.49\textwidth}
\includegraphics[width=\textwidth]{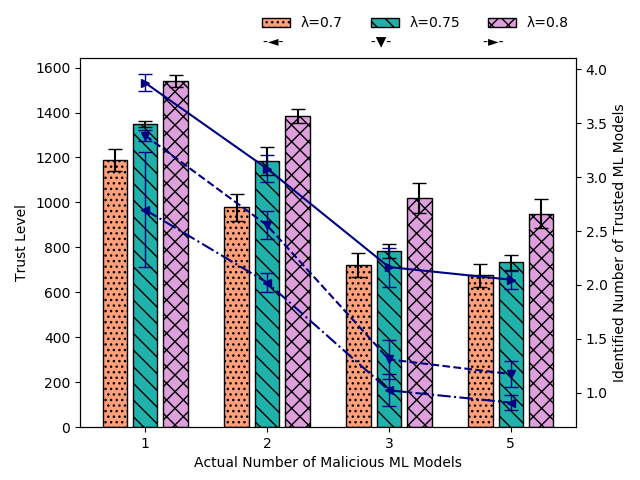}
\caption{IIoT predictive maintenance use-case: The effect of malicious models on the trust level.}
\label{Fig4_IIoT.png}
\endminipage\hfill
\end{figure*}
\subsubsection{Traffic Flow Volume Prediction}

In our first experiment, we studied the Root Mean Square Error (RMSE) of the models selected using our proposed fixing heuristic vis-\`a-vis the individual models. We set the reconfiguration budget $B$ to $7$ as shown in Figure \ref{Fig0_SCpng}. As the figure shows, the proposed fixing heuristic results in 11\%--66.95\% less RMSE when compared to the individual models. 

In our second experiment, the trust levels resulting from the three heuristics are compared under different reconfiguration budgets as illustrated in Figure \ref{Fig1_SC.png}. The figure shows the confidence interval for $5$ replications. In each replication, the malicious model is applied on a different model (e.g., $MM_1$, $MM_2$, $\dots$ , or $MM_n$). In this experiment, we set $\lambda$ to $0.85$, $M$ to $7$, and the number of malicious models $C$ to $1$. The number of non-malicious models is $M-C$. 

In our third experiment, the trust level of the selected models is studied as the number of models $M$ is varied ($5$, $9$, and $17$) as illustrated in Figure \ref{Fig2_SC.png}. In this experiment, we set $\lambda$ to $0.85$ and $C$ to $1$. In addition, the figure indicates that as $M$ is increased, the trust level of the selected models is increased too.

Figure \ref{Fig3_SC.png} shows the results of our fourth experiment. In this experiment, the effect of using different values of $\lambda$ ($0.8$, $0.85$, $0.9$, $0.95$) on the trust level of the selected models is analyzed. In this experiment, we set $C$ to $1$ and $M$ to $7$. The figure shows this effect for different reconfiguration budgets $B$. The figure indicates that as $\lambda$ is increased, the trust level of the selected models is increased too.  

Figure \ref{Fig4_SC.png} shows the effect of the number of the malicious LSTM models $C$ ($3$, $5$, and $7$) on the trust level of the selected models for different values of $\lambda$ ($0.8$, $0.85$, $0.9$). The figure also shows  the actual number of malicious LSTM models versus the identified number of malicious LSTM models. In this experiment, we set $M$ to $17$ and $B$ to $7$. 

\subsubsection{Predictive Maintenance in IIoT}

In our first experiment, we studied the Root Mean Square Error (RMSE) of the models selected using our proposed fixing heuristic vis-\`a-vis the individual models. We set the reconfiguration budget $B$ to $7$ as shown in Figure \ref{Fig0_IIoT.png}. As the figure shows, the proposed fixing heuristic results in $0.5\%$--$15\%$ less RMSE when compared to the individual models. 

In our second experiment, the trust levels resulting from the three heuristics are compared under different reconfiguration budgets as illustrated in Figure \ref{Fig1_IIoT.png}. The figure shows the confidence interval for $5$ different replications. In each replication, the malicious model is applied to a different model (e.g. $MM_1$, $MM_2$, $\dots$ , or $MM_n$). In this experiment, we set $\lambda$ to $0.75$, number of malicious models $C$ to $1$, and $M$ to $7$. The number of non-malicious models is $M - C$.

In the third experiment, the trust level of the selected models is studied as the number of models $M$ is varied ($5$, $7$, and $9$) as illustrated in Figure \ref{Fig2_IIoT.png}. In this experiment, we set $\lambda$ to $0.75$ and $C$ to $1$. In addition, the figure indicates that as $M$ is increased, the trust level of the selected models is increased too. 


Figure \ref{Fig3_IIoT.png} shows the results of our fourth experiment. In this experiment, the effect of using different values of $\lambda$ ($0.7$, $0.75$, $0.8$) on the trust level of the selected models is analyzed. In this experiment, we set $C$ to $1$ and $M$ to $7$. The figure shows this effect given different reconfiguration budgets $B$. The figure indicates that as $\lambda$ is increased, the trust level of the selected models is increased too.  

Figure \ref{Fig4_IIoT.png} shows the effect of the number of the malicious LSTM models $C$ ($1$, $2$, and $3$) on the trust level of the selected models for different values of $\lambda$ ($0.7$, $0.75$, $0.8$). The figure also shows the actual number of malicious LSTM models versus the identified number of malicious LSTM models. In this experiment, we set $M$ to $7$ and $B$ to $7$.

\subsection{Discussion and Lessons Learned}

We can conclude the following based on the results presented in the previous section:

\begin{enumerate}
\item It is important to evaluate ML models used in critical and sensitive decisions in terms of trustworthiness and reliability. Additionally, other traditional criteria of ML model evaluation must be considered (e.g., accuracy, run time, etc.).

\item Our proposed fixing heuristic strives to maximize the trust level while not affecting the accuracy of the selected models, as Figures \ref{Fig0_SCpng} and \ref{Fig0_IIoT.png} indicate.

\item Figures \ref{Fig1_SC.png} and \ref{Fig1_IIoT.png} show that
our proposed fixing heuristic is able to obtain a trust level that is $0.7\%$--$2.53\%$ lower than that obtained by the upper bound solution in smart city case study, and $0.49\%$--$3.17\%$ lower than that obtained by the upper bound solution in IIoT case study. Figures \ref{Fig1_SC.png} and \ref{Fig1_IIoT.png} also indicate that by increasing the reconfiguration budget, the trust level is increased. However, there is a limit beyond which increasing the number of reconfigurations does not increase the trust level.

\item Figures \ref{Fig2_SC.png} and \ref{Fig2_IIoT.png} indicate that increasing the number of selected models lead to an increase in the trust level of the overall system. This fact is similar to the concept of evaluating the seller feedback on online shopping sites, restaurants, or hotels reviews. As the volume of feedback increases, the level of reliability of such reviews increases as well.  

\item Figures \ref{Fig3_SC.png} and \ref{Fig3_IIoT.png} indicate that increasing $\lambda$, the number of the excluded models is decreased. However, increasing $\lambda$ beyond a specific threshold may lead to the use of malicious models. On the other hand, using a small value for $\lambda$ leads to excluding more models, which might not be malicious.

\item  Figures \ref{Fig4_SC.png} and \ref{Fig4_IIoT.png} indicate that increasing the number of malicious models leads to a decrease in the trust level of the overall system. This is due to the fact that the proposed heuristic excludes malicious models and it might reach a fail-safe execution state in which it informs the resource-constrained devices that there are no trusted ML models to be hosted on them.     \end{enumerate}

\section{Conclusions and Future Works}
\label{ConclusionSec}
In this paper, we consider the paradigm in which resource-constrained IoT devices execute ML algorithms locally, without necessarily being connected to the cloud all the time. This paradigm is desirable in systems that have strict latency, connectivity, energy, privacy, and security requirements. There is a strong need in such environments to evaluate the level of trustworthiness of ML models built by different service providers, we formulate the problem of finding a subset of ML models that maximizes the trustworthiness while adhering to a given reconfiguration budget and rate constraints. We prove that this problem is NP-complete and propose a fixing heuristic that finds a near-optimal solution in polynomial time.

To measure the performance of the proposed fixing heuristic compared to integer linear programming (ILP), we applied our proposed fixing heuristic to two different case studies: (1) the traffic flow volume dataset to predict the number of vehicles (as a proxy case study for smart cities services); and (2) the turbofan engine degradation simulation dataset to predict the remaining useful life for the engine (as a proxy for  IIoT services). Our proposed fixing heuristic returns impressive performance achieving a high trust level that is less than the optimal ILP solution by only $0.7\%$--$2.53\%$ in the smart city service case study and $0.49\%$--$3.17\%$ less in the IIoT service case study.

There are a number of avenues of future work that can be pursued. Although we only use LSTM for developing the models in this paper, other types of models (e.g., CNN, deep neural networks, and SVM) can also be explored. It would be interesting to perform a comparative study of these models and also consider their robustness to adversarial attacks compared to our proposed fixing heuristic. Additionally, potential applications of our proposed heuristic can be explored in the speech, video, and medical domains, and in recommendation systems. 

\ifCLASSOPTIONcaptionsoff
  \newpage
\fi




\bibliographystyle{IEEEtran}
\bibliography{refrecovery.bib}

\begin{IEEEbiography}[{\includegraphics[width=1in, height=1.25in,clip]{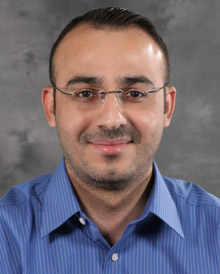}}]%
{Basheer Qolomany (S'17)}
received the Ph.D. and second master’s en-route to Ph.D. degrees in Computer Science from Western Michigan University (WMU), Kalamazoo, MI, USA, in 2018. He also received his B.Sc. and M.Sc. degrees in computer science from University of Mosul, Mosul city, Iraq, in 2008 and 2011, respectively. He is currently an Assistant Professor at Department of Cyber Systems, University of Nebraska at Kearney (UNK), Kearney, NE, USA. Previously, he served as a Visiting Assistant Professor at Department of Computer Science, Kennesaw State University (KSU), Marietta, GA, USA, in 2018-2019; a Graduate Doctoral Assistant at Department of Computer Science, WMU, in 2016-2018; he also served as a lecturer at Department of Computer Science, University of Duhok, Kurdistan region of Iraq, in 2011-2013. His research interests include machine learning, deep learning, Internet of Things, smart services, cloud computing, and big data analytics. Dr. Qolomany has served as a reviewer of multiple journals, including IEEE Internet of Things journal, Energies — Open Access Journal, and Elsevier - Computers and Electrical Engineering journal. He also served as a Technical Program Committee (TPC) member and a reviewer of some international conferences including IEEE Globecom, IEEE IWCMC, and IEEE VTC.

\end{IEEEbiography}

\begin{IEEEbiography}[{\includegraphics[width=1.1in,height=1.22in,clip]{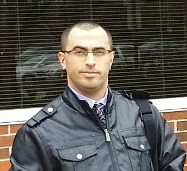}}]%
{Ihab Mohammed (S'14)}
    is a Ph.D. student at the NEST Research Lab in the Computer Science department of Western Michigan University, Kalamazoo, MI, USA. He received his B.S. and M.S. degrees in computer science from Al-Nahrain University in Iraq in 2002 and 2005, respectively. His current research interests include the design, simulation, and analysis of algorithms in the fields of computer networks, Internet of Things, vehicular networks, and big data.  

\end{IEEEbiography}

\begin{IEEEbiography}[{\includegraphics[width=1.1in,height=1.22in,clip]{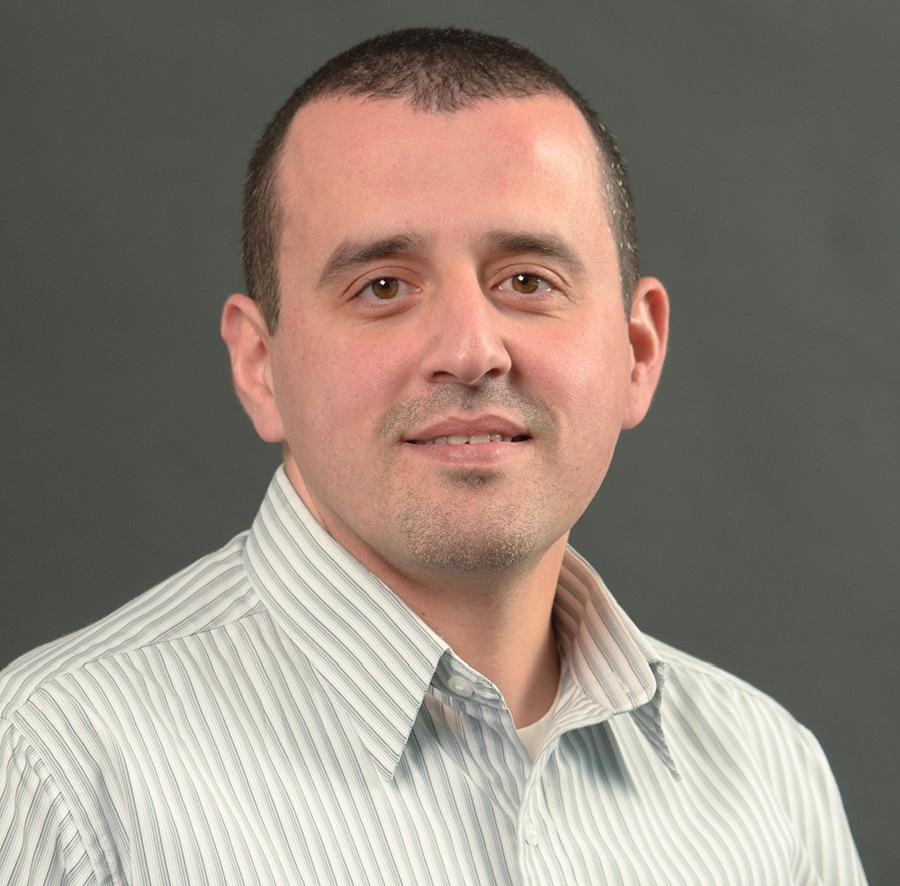}}]%
{Ala Al-Fuqaha (S'00-M'04-SM'09)}
  received Ph.D. degree in Computer Engineering and Networking from the University of Missouri-Kansas City, Kansas City, MO, USA, in 2004. He is currently a professor at Hamad Bin Khalifa University (HBKU) and Western Michigan University. His research interests include the use of machine learning in general and deep learning in particular in support of the data-driven and self-driven management of large-scale deployments of IoT and smart city infrastructure and services, Wireless Vehicular Networks (VANETs), cooperation and spectrum access etiquette in cognitive radio networks, and management and planning of software defined networks (SDN). He is a senior member of the IEEE and an ABET Program Evaluator (PEV). He serves on editorial boards of multiple journals including IEEE Communications Letter and IEEE Network Magazine. He also served as chair, co-chair, and technical program committee member of multiple international conferences including IEEE VTC, IEEE Globecom, IEEE ICC, and IWCMC.
 
\end{IEEEbiography}

\begin{IEEEbiography}[{\includegraphics[width=1.1in,height=1.22in,clip]{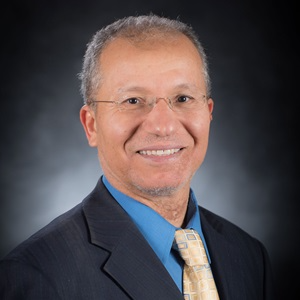}}]%
{Mohsen Guizani (S'85-M'89-SM'99-F'09)}
received the B.S. and M.S. degrees in electrical engineering and the M.S. and Ph.D. degrees in computer engineering from Syracuse University, in 1984, 1986, 1987, and 1990, respectively. He was the Associate Vice President of Qatar University, the Chair of the Computer Science Department, Western Michigan University, the Chair of the Computer Science Department, University of West Florida, and the Director of graduate studies at the University of Missouri–Columbia. He is currently a Professor with the Department of Computer Science and Engineering, Qatar University. He has authored or coauthored nine books and publications in refereed journals and conferences. His research interests include wireless communications and mobile computing, vehicular communications, smart grid, cloud computing, and security.
\end{IEEEbiography}

\begin{IEEEbiography}[{\includegraphics[width=1.1in,height=1.22in,clip]{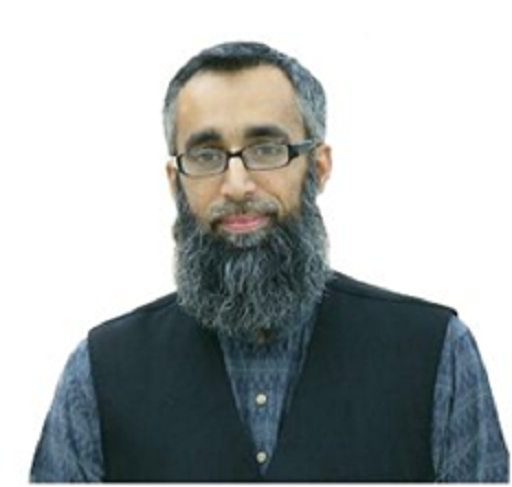}}]%
{Junaid Qadir (M'14 -- SM'14)}
received Ph.D. from University of New South Wales, Australia in 2008 and his Bachelors in Electrical Engineering from UET, Lahore, Pakistan in 2000. He is a Professor at the Information Technology University (ITU)--Punjab, Lahore. He is the Director of the IHSAN (ICTD; Human Development; Systems; Big Data Analytics; Networks Lab) Research Lab at ITU (http://ihsanlab.itu.edu.pk/). His primary research interests are in the areas of computer systems and networking and using ICT for development (ICT4D). Dr. Qadir has served on the program committee of a number of international conferences and reviews regularly for various high-quality journals. He is an Associate Editor for IEEE Access, Springer Nature Central's Big Data Analytics journal, Springer Human-Centric Computing and Information Sciences, and the IEEE Communications Magazine. He is an award-winning teacher who has been awarded the highest national teaching award in Pakistan—the higher education commission’s (HEC) best university teacher award—for the year 2012-2013. He has considerable teaching experience and a wide portfolio of taught courses in the disciplines of systems \& networking; signal processing; and wireless communications and networking. He is a senior member of IEEE and ACM. He has been appointed as an ACM Distinguished Speaker from 2020 to 2022.

\end{IEEEbiography}

\end{document}